\newcommand*\diff{\mathop{}\!\mathrm{d}}
\newcommand{\z}{{\bf z}}
\newcommand{\x}{{\bf x}}
\newcommand{\ttt}{{\boldsymbol{\mathrm{t}}}}
\newcommand{\X}{\mathcal{X}}
\newcommand{\Y}{\mathcal{Y}}
\newcommand{\Hess}{{\bf H}}
\newcommand{\norm}[1]{\left\lVert#1\right\rVert}
\DeclareMathOperator*{\argmin}{arg\,min}
\newtheorem{thm}{Theorem}
\newtheorem{prop}{Proposition}
\newtheorem{assumption}{Assumption}
\begin{document}

\title{Adaptive Learning of Density Ratios in RKHS\\~}

\author{\name Werner Zellinger \email werner.zellinger@oeaw.ac.at \\
       \addr Johann Radon Institute for Computational and Applied Mathematics\\
       Austrian Academy of Sciences\\
       Linz, Austria
       \AND
       \name Stefan Kindermann \email kindermann@indmath.uni-linz.ac.at \\
       \addr Industrial Mathematics Institute\\
       Johannes Kepler University Linz\\
       Linz, Austria
       \AND
       \name Sergei V.~Pereverzyev \email sergei.pereverzyev@oeaw.ac.at \\
       \addr Johann Radon Institute for Computational and Applied Mathematics\\
       Austrian Academy of Sciences\\
       Linz, Austria
       }

\editor{Mahdi Soltanolkotabi}

\maketitle

\begin{abstract}%
Estimating the ratio of two probability densities from finitely many observations of the densities is a central problem in machine learning and statistics with applications in two-sample testing, divergence estimation, generative modeling, covariate shift adaptation, conditional density estimation, and novelty detection.
In this work, we analyze a large class of density ratio estimation methods that minimize a regularized Bregman divergence between the true density ratio and a model in a reproducing kernel Hilbert space (RKHS).
We derive new finite-sample error bounds, and we propose a Lepskii type parameter choice principle that minimizes the bounds without knowledge of the regularity of the density ratio.
In the special case of square loss, our method adaptively achieves a minimax optimal error rate.
A numerical illustration is provided.
\end{abstract}

\begin{keywords}
  density ratio estimation, reproducing kernel Hilbert space, learning theory, regularization, inverse problem
\end{keywords}

\section{Introduction}

\subsection{Problem}
\label{subsec:problem_setting}

Let $P$ and $Q$ be two probability measures on a measurable space $(\X,\mathcal{A})$ with compact $\X\subset\mathbb{R}^d$ and $\sigma$-algebra $\mathcal{A}$ of its measurable subsets.
Assume that $P$ is absolutely continuous with respect to $Q$.
Then, the Radon-Nikod\'ym theorem grants us a $Q$-integrable function $\beta:\X\to [0,\infty)$ satisfying, for any $U\in\mathcal{A}$,
\begin{align}
    \label{eq:RN_derivative}
    P(U)=\int_U \beta(x)\diff Q(x).
\end{align}
If both measures, $P$ and $Q$, have densities with respect to another reference measure, e.g., the Lebesgue measure, then $\beta$ equals the \textit{ratio of their densities}.

The problem of learning $\beta$ from samples
$\x:=(x_i)_{i=1}^m$ and $\x':=(x_i')_{i=1}^{n}$, drawn independently and identically (i.i.d.) from $P$ and $Q$, respectively,
plays a central role in statistics and machine learning~\citep{sugiyama2012density}; its applications include anomaly detection~\citep{smola2009relative,hido2011statistical}, two-sample testing~\citep{keziou2005test,kanamori2011f}, divergence estimation~\citep{nguyen2007estimating,nguyen2010estimating}, covariate shift adaptation~\citep{shimodaira2000improving,gizewski2022regularization,dinu2022aggregation}, generative modeling~\citep{mohamed2016learning}, conditional density estimation~\citep{schuster2020kernel}, and classification from
positive and unlabeled data~\citep{kato2019learning}.
Note, that this problem is different from supervised learning~\citep{Cucker2001OnTM} as the given data do not include point evaluations $\beta(x)$ from the desired target $\beta$.

It has been first observed by~\citet*{sugiyama2012densitybregman} that many algorithms for density ratio estimation compute finite-sample approximations of the objective $\beta$ by discretizing the minimization problem
\begin{align}
    \label{eq:regularized_Bregman_objective}
    \min_{f\in\mathcal{H}} B_F\!\left(\beta,g(f)\right) + \frac{\lambda}{2} \norm{f}^2,
\end{align}
where $g(f)$ is a model of the Radon-Nikod\'ym derivative $\beta$ with $f\in\mathcal{H}$
in a reproducing kernel Hilbert space (RKHS) $\mathcal{H}$,
$B_F(\beta,\widetilde{\beta}):=F(\beta)-F(\widetilde{\beta})-\nabla F(\widetilde{\beta})[\beta-\widetilde{\beta}]$ is a Bregman divergence generated by some convex functional $F:L^1(Q)\to\mathbb{R}$,
$\norm{.}$ is a norm on $\mathcal{H}$, and $\lambda\in\mathbb{R}$ is a regularization parameter; see Example~\ref{ex:example_in_introduction}.
For the exact form of the 
finite-sample approximations applied in practice, we ask the reader for patience until Eq.~\eqref{eq:empirical_loss_objective} is stated.

\begin{example}
\label{ex:example_in_introduction}
~
    \begin{itemize}
        \item As can be seen from~\citep[Section~3.2.1]{sugiyama2012densitybregman}, the kernel unconstrained least-squares importance fitting procedure (KuLSIF) in~\citep{kanamori2009least} uses Eq.~\eqref{eq:regularized_Bregman_objective} with $F(h)=\int_\X (h(x)-1)^2/2\diff Q(x)$ and $g=\mathrm{id}$, where $\mathrm{id}(f)=f$ is the identity (cf.~\citep[Eq.~(4)]{kanamori2012statistical}).
        This leads to a discretization of the minimization problem
        \begin{align}
            \label{eq:kulsif}
            \min_{f\in\mathcal{H}} \frac{1}{2} \norm{\beta-f}_{L^2(Q)}^2 + \frac{\lambda}{2} \norm{f}^2.
        \end{align}
        \item The logistic regression approach (LR) employed in~\citep[Section~7]{bickel2009discriminative} can be obtained from Eq.~\eqref{eq:regularized_Bregman_objective} using $F(h)=\int_\X h(x)\log(h(x))-(1+h(x))\log(1+h(x))\diff Q(x)$ and $g(f)=e^{f}$ with $f(x)=\theta^\text{T} \varphi(x)$ for some $\varphi:\X\to\mathbb{R}^d$; see~\citep[Section~3.2.3]{sugiyama2012densitybregman}.
        \item The exponential function approach (Exp) introduced in~\citep[Section~7]{menon2016linking} can be obtained using $F(h)=\int_\X h(x)^{-3/2}\diff Q(x)$ and $g(f)=e^{2 f}$; see~\citep[Table~1]{menon2016linking}.
        \item The square loss (SQ) is used in~\citep[Table~1]{menon2016linking} with $F(h)=\int_X 1/(2 h(x) + 2)\diff Q(x)$ and $g(f)=\frac{-1+2 f}{2-2 f}$.
    \end{itemize}
\end{example}
To the best of our knowledge, no error bounds are known for estimating Eq.~\eqref{eq:regularized_Bregman_objective} for the functionals in Example~\ref{ex:example_in_introduction} that take into account the regularity of the density ratio $\beta$, except~\citep[Section~3.1]{gizewski2022regularization} and~\citep{nguyen2023regularized} which hold only for KuLSIF.
Moreover, except for KuLSIF, no method for choosing the regularization parameter $\lambda$ in Eq.~\eqref{eq:regularized_Bregman_objective} is known that adapts to the regularity of $\beta$, i.e., a method with error bounds provably decreasing for increasing regularity.

\subsection{Results}

Following~\citet{menon2016linking}, one can relate the minimization of the data fidelity term $B(f):=B_F(\beta,g(f))$ from Eq.~\eqref{eq:regularized_Bregman_objective}, to the minimization of the expected loss with a loss function $\ell:\{-1,1\}\times\mathbb{R}\to\mathbb{R}$ measuring by $\ell(-1,f(x))$ and $\ell(1,f(x))$, respectively, the error of a classifier $f(x)$ which predicts whether $x$ is drawn from $P$ or $Q$.%

Our first result (Example~\ref{ex:self-concordant_losses}) shows that all loss functions $\ell$ corresponding to Example~\ref{ex:example_in_introduction}, satisfy a property called \textit{generalized self-concordance}~\citep{marteau2019beyond}.
This allows us to extend results from supervised learning in RKHSs to density ratio estimation.

Our second result (Proposition~\ref{prop:fast_rate}) provides, in a well specified setting, for large enough sample sizes and with high probability, an error bound
\begin{align}
\label{eq:result_rate}
    B(f_{\x,\x'}^{\lambda})\leq O\left( (m+n)^{-\frac{2 r \alpha+\alpha}{2 r \alpha +\alpha+1}}\right)
\end{align}
for a minimizer $f_{\x,\x'}^{\lambda}$ of the empirical estimation Eq.~\eqref{eq:empirical_loss_objective} of Eq.~\eqref{eq:regularized_Bregman_objective} with appropriately chosen regularization parameter $\lambda$.
The regularity index $r$ and the capacity index $\alpha$ in Eq.~\eqref{eq:result_rate} originate from source conditions as used in inverse problems and learning theory.
In the special case of square loss (SQ), the error rate in Eq.~\eqref{eq:result_rate} is minimax optimal in the same sense as it is minimax optimal in supervised learning, see~\citep{caponnetto2007optimal,blanchard2018optimal}.

Our third result (Eq.~\eqref{eq:balancing_principle_estimate}) is a new method for choosing the regularization parameter $\lambda$ that does not need access to the regularity index $r$ but nevertheless achieves the same rate as in Eq.~\eqref{eq:result_rate} (see Theorem~\ref{thm:balancing_principle_empirical}).
Our method is based on a Lepskii-type principle balancing bias and variance terms of a local quadratic approximation of the (generally not quadratic) data fidelity $B(f)$.
Our parameter choice strategy achieves the error rate in Eq.~\eqref{eq:result_rate} and is thus optimal in the case of square loss.
A numerical example illustrates our theoretical findings.

\subsection{Related Work}

The representation of density ratio estimation methods as minimizers of empirical estimations of Eq.~\eqref{eq:regularized_Bregman_objective} has been first observed by~\citet{sugiyama2012densitybregman} and later characterized in more detail by~\citet{menon2016linking}.
Our work extends~\citep{sugiyama2012densitybregman,menon2016linking} by taking into account the effect of the regularization term $\frac{\lambda}{2}\norm{f}^2$ in Eq.~\eqref{eq:regularized_Bregman_objective}.

In general, to the best of our knowledge, there are no error bounds similar to Eq.~\eqref{eq:result_rate} for the class of discretized minimization problems Eq.~\eqref{eq:regularized_Bregman_objective}.
For the special case of KuLSIF in Eq.~\eqref{eq:kulsif}, error rates have been discovered in~\citep{kanamori2012statistical,gizewski2022regularization,nguyen2023regularized}.
In particular, the error rate discovered in~\citep{kanamori2012statistical}, for RKHS with \textit{bracketing entropy} $\gamma\in(0,2)$, is
\begin{align}
\label{eq:kulsif_rate}
    B(f_{\z}^\lambda)=O\!\left(\min\{m,n\}^{-\frac{2}{2+\gamma}}\right).
\end{align}
This rate is worse than the one in Eq.~\eqref{eq:result_rate}, whenever $\alpha (2 r+1)>\frac{2}{\gamma}$.
A similar result for KuLSIF and the special case of Gaussian kernel, is presented in~\cite{que2013inverse}, see Type~I setting there.
We can also mention~\citep[Remark~3]{gizewski2022regularization} which provides the error bound
\begin{align}
    \label{eq:gizewski_fast_rate_intro}
    B(f_{\z}^\lambda)
    = O\!\left(\left(m^{-\frac{1}{2}}+n^{-\frac{1}{2}}\right)^{\frac{4r'+2}{2r'+2}}\right)
\end{align}
with $r'$ originating from a source condition similar to our Assumption~\ref{ass:source_condition} but for the covariance operator $T=\int_\mathcal{X} k(z,\cdot)\otimes k(z,\cdot)\diff Q(z)$ of $Q$ instead of $\rho$, where $k$ denotes the kernel of $\mathcal{H}$ and $(u\otimes v)[f]=\langle u,f\rangle v$ the rank-$1$ operator.
Our result in Eq.~\eqref{eq:result_rate} improves Eq.~\eqref{eq:gizewski_fast_rate_intro} for $\alpha\in(1,\infty)$ and $0<r'\leq r\leq\frac{1}{2}$.
In the recent work~\citep{nguyen2023regularized}, Eq.~\eqref{eq:gizewski_fast_rate_intro} is improved by additionally taking into account the capacity of the space $\mathcal{H}$ in terms of a new pointwise source condition on the reproducing kernel of $\mathcal{H}$ that allows to bound the regularized Christoffel function.
Such condition can be used as alternative for the effective dimension as defined in our Assumption~\ref{ass:capacity_condition}.
As a result, an error bound similar to our Eq.~\eqref{eq:result_rate} is obtained in~\citep{nguyen2023regularized}. However, the error is measured in the $\mathcal{H}$-norm and the bound holds only for the special case of the KuLSIF.


To the best of our knowledge, the balancing principle introduced in this work is the first parameter choice method for density ratio estimation with discovered error rate.
We can mention variants of cross-validation~\citep{kanamori2012statistical}, the quasi-optimality criterion~\citep{nguyen2023regularized}, and hints towards the use of balancing principles~\citep{gizewski2022regularization}, but, even in the special case of KuLSIF, we are not aware of any finite-sample error bounds related to these approaches.

Some of our proof techniques rely on~\citep{marteau2019beyond}, where  self-concordant loss functions in the context of kernel learning were studied first.
Our analysis extends~\citep{marteau2019beyond} to density ratio estimation and adaptive parameter choice.

\section{Notation and Auxiliaries}
\label{sec:preliminaries}

\subsection{Estimation of Bregman Divergence}
\label{subsec:class_probability_estimation}

\citet*{menon2016linking} study properties of $F$ and $g$, which allow an intuitive estimation of Eq.~\eqref{eq:regularized_Bregman_objective} by discriminating i.i.d.~observations $\x=(x_i)_{i=1}^m$ from $P$ and $\x'=(x_i')_{i=1}^n$ from $Q$.
In the following, we will summarize some of these properties.

Let us start by introducing the underlying model: a probability measure $\rho$ on $\X\times\Y$, $\Y:=\{-1,1\}$, with conditional probability measures\footnote{Existence follows from $\X\times\Y$ being Polish (cf.~\citep[Theorem~10.2.1]{dudley2018real}, the product case).} defined by $\rho(x|y=1):=P(x)$ and $\rho(x|y=-1):=Q(x)$ and a marginal (w.r.t.~$y$) probability measure $\rho_\Y$ defined as Bernoulli measure with probability $\frac{1}{2}$ for both events $y=1$ and $y=-1$.
We make the following assumption on the data generation process.%
\begin{assumption}[iid data]
    \label{ass:iid_of_classification_data}
    The data
    \begin{align}
    \z:=(x_i,1)_{i=1}^m\cup(x_i',-1)_{i=1}^n \in \X\times\Y   
    \end{align}
    is an i.i.d.~sample of an $\X\times\Y$-valued random variable $Z$ with measure $\rho$.
\end{assumption}
Assumption~\ref{ass:iid_of_classification_data} allows us to introduce a binary classification problem for the accessible data $\z$.
Recall the goal of the binary classification problem  is to find, based on the data $\z$, a classifier $f:\X\to\mathbb{R}$ with a small \textit{expected risk}
\begin{align}
    \label{eq:expected_risk}
    \mathcal{R}(f):=\mathbb{E}_{(x,y)\sim\rho}[\ell(y,f(x))]=\int_{\X\times\Y} \ell(y,f(x))\diff \rho(x,y)
\end{align}
for some loss function $\ell:\Y\times\mathbb{R}\to\mathbb{R}$.
In this work, we focus on \textit{strictly proper composite} loss functions with twice differentiable Bayes risk, which are characterized by the following assumption.

\begin{assumption}
    \label{ass:strictly_proper_composit_and_diff_bayes_risk}
    The function $\ell:\Y\times\mathbb{R}\to\mathbb{R}$ has an associated invertible link function $\Psi:[0,1]\to\mathbb{R}$ such that
    \begin{enumerate}
        \item the \textit{Bayes classifier} $f^\ast:=\argmin_{f:\X\to \mathbb{R}} \mathcal{R}(f)$ exists and $f^\ast(x)=\Psi\circ \rho(y=1|x)$,
        \item the \textit{Bayes risk} $G(u):= u \ell(1,\Psi(u))+(1-u) \ell(-1,\Psi(u))$ is twice differentiable.
    \end{enumerate}
\end{assumption}
The main result of~\citet{menon2016linking} is the following representation of the data fidelity in Eq.~\eqref{eq:regularized_Bregman_objective}, in terms of a loss function satisfying Assumption~\ref{ass:strictly_proper_composit_and_diff_bayes_risk}.
\begin{lemma}[{\citet{menon2016linking}, Proposition~3}]
\label{lemma:bregman_form_of_loss}
Let the Bregman generator $F$ and the density ratio model $g(f)$ be defined by
\begin{align}
    \label{eq:Bregman_generator_and_density_ratio_model}
    F(h):=-\int_\X (1+h(x))\cdot G\!\left(\frac{h(x)}{1+h(x)}\right)\diff Q(x),\quad g(f):=\frac{\Psi^{-1}\circ f}{1-\Psi^{-1}\circ f},
\end{align}
for some link function $\Psi$ and the Bayes risk $G$ of a loss $\ell$ satisfying Assumption~\ref{ass:strictly_proper_composit_and_diff_bayes_risk}. Then\begin{align}
    \label{eq:bregman_identity_for_density_ratio}
    B_F(\beta,g(f)) = 2 \left(\mathcal{R}(f)-\mathcal{R}(f^\ast)\right).
\end{align}
\end{lemma}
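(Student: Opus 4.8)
\emph{Proof plan.} The plan is to reparametrise both sides of Eq.~\eqref{eq:bregman_identity_for_density_ratio} in terms of the conditional probability $\eta^\ast(x):=\rho(y=1\,|\,x)$ and the probabilistic prediction $\eta(x):=\Psi^{-1}(f(x))$, express each side as an integral over $\X$ against $Q$ of a pointwise function of $\eta^\ast$ and $\eta$, and check that the two integrands coincide. First I would reduce the right-hand side. Since $\rho(x|y=1)=P$, $\rho(x|y=-1)=Q$ and $\rho_\Y(\pm1)=\tfrac12$, Eq.~\eqref{eq:expected_risk} gives $\mathcal{R}(f)=\tfrac12\int_\X\ell(1,f)\diff P+\tfrac12\int_\X\ell(-1,f)\diff Q$, and $\diff P=\beta\diff Q$ turns this into $\mathcal{R}(f)=\tfrac12\int_\X(\beta\,\ell(1,f)+\ell(-1,f))\diff Q$. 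Bayes' rule (with $\rho_\Y$ uniform) gives $\eta^\ast=\beta/(1+\beta)$, equivalently $1+\beta=1/(1-\eta^\ast)$; writing $f=\Psi\circ\eta$ and factoring $1+\beta$ out of the integrand yields
\[
  \mathcal{R}(f)=\tfrac12\int_\X (1+\beta(x))\,L\big(\eta^\ast(x),\eta(x)\big)\diff Q(x),\qquad L(u,v):=u\,\ell(1,\Psi(v))+(1-u)\,\ell(-1,\Psi(v)).
\]
By Assumption~\ref{ass:strictly_proper_composit_and_diff_bayes_risk}(1), $f^\ast=\Psi\circ\eta^\ast$, so $f^\ast$ corresponds to $\eta=\eta^\ast$ and $L(\eta^\ast,\eta^\ast)=G(\eta^\ast)$; hence $2(\mathcal{R}(f)-\mathcal{R}(f^\ast))=\int_\X(1+\beta)\big(L(\eta^\ast,\eta)-G(\eta^\ast)\big)\diff Q$.

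The next step is the Savage (Bregman) representation of the conditional risk: for all $u,v\in[0,1]$,
\[
  L(u,v)=G(v)+G'(v)\,(u-v).
\]
Both sides are affine in $u$ and coincide at $u=v$ (each equals $G(v)$), so the identity holds once their derivatives in $u$ agree, i.e.\ once $\ell(1,\Psi(v))-\ell(-1,\Psi(v))=G'(v)$. I would obtain this by differentiating $G(v)=v\,\ell(1,\Psi(v))+(1-v)\,\ell(-1,\Psi(v))$: the product rule produces $\ell(1,\Psi(v))-\ell(-1,\Psi(v))$ plus the term $v\,\partial_v\ell(1,\Psi(v))+(1-v)\,\partial_v\ell(-1,\Psi(v))$, and the latter vanishes because strict properness makes $v$ minimise $w\mapsto L(v,w)$, so it satisfies the corresponding first-order condition; Assumption~\ref{ass:strictly_proper_composit_and_diff_bayes_risk}(2) (twice differentiable $G$) legitimises these manipulations. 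Substituting the representation in gives $2(\mathcal{R}(f)-\mathcal{R}(f^\ast))=\int_\X(1+\beta)\big(G(\eta)-G(\eta^\ast)+G'(\eta)(\eta^\ast-\eta)\big)\diff Q$.

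It then remains to compute the left-hand side $B_F(\beta,g(f))=F(\beta)-F(g(f))-\nabla F(g(f))[\beta-g(f)]$ directly. Using the substitution $t\mapsto u(t):=t/(1+t)$, a bijection of $[0,\infty)$ onto $[0,1)$, Eq.~\eqref{eq:Bregman_generator_and_density_ratio_model} reads $F(h)=-\int_\X\phi(h)\diff Q$ with $\phi(t):=(1+t)\,G(u(t))$, so $\nabla F(h)[w]=-\int_\X\phi'(h)\,w\diff Q$ with $\phi'(t)=G(u(t))+G'(u(t))/(1+t)$. Evaluating at $h=\beta$ gives $u(\beta)=\eta^\ast$ and $1+\beta=1/(1-\eta^\ast)$; at $h=g(f)$ it gives $u(g(f))=\eta$ and $1+g(f)=1/(1-\eta)$; and $\beta-g(f)=(\eta^\ast-\eta)/((1-\eta^\ast)(1-\eta))$. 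Plugging these in and collecting terms, the integrand of $B_F(\beta,g(f))$ becomes
\[
  -\frac{G(\eta^\ast)}{1-\eta^\ast}+\frac{G(\eta)}{1-\eta}+\Big(\frac{G(\eta)}{1-\eta}+G'(\eta)\Big)\frac{\eta^\ast-\eta}{1-\eta^\ast},
\]
and subtracting the integrand $\tfrac{1}{1-\eta^\ast}\big(G(\eta)-G(\eta^\ast)+G'(\eta)(\eta^\ast-\eta)\big)$ of $2(\mathcal{R}(f)-\mathcal{R}(f^\ast))$ leaves only $\tfrac{G(\eta)}{(1-\eta)(1-\eta^\ast)}\big((1-\eta^\ast)+(\eta^\ast-\eta)-(1-\eta)\big)=0$. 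This yields Eq.~\eqref{eq:bregman_identity_for_density_ratio}.

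I expect the main obstacle to be the Savage representation: it is the one place where Assumption~\ref{ass:strictly_proper_composit_and_diff_bayes_risk} (strict properness together with a smooth Bayes risk) is genuinely used, through the first-order optimality condition that annihilates the derivative terms. Everything else is bookkeeping, though one should also record the mild regularity (boundedness of $G$ and of the damped derivative $(1-u)\,G'(u)$ on $[0,1)$) ensuring that $F$ and $\nabla F$ have the stated integral forms and that the change of variables is valid on the ranges of $\beta$ and $g(f)$.
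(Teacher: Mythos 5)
The paper offers no proof of this lemma at all---it is imported verbatim from \citet[Proposition~3]{menon2016linking}---and your argument is a correct, self-contained reconstruction of the standard derivation there: reparametrising by $\eta=\Psi^{-1}\circ f$ and $\eta^\ast=\beta/(1+\beta)$, invoking the Savage representation $L(u,v)=G(v)+G'(v)(u-v)$ of a proper composite loss, and checking that the integrands of $B_F(\beta,g(f))$ and $2(\mathcal{R}(f)-\mathcal{R}(f^\ast))$ coincide pointwise (your final cancellation is right). The one step deserving slightly more care is the product-rule derivation of $G'(v)=\ell(1,\Psi(v))-\ell(-1,\Psi(v))$, which tacitly assumes $v\mapsto\ell(\pm 1,\Psi(v))$ is differentiable, something Assumption~\ref{ass:strictly_proper_composit_and_diff_bayes_risk} does not literally grant (it only posits smoothness of $G$); this is cleanly avoided by observing that $u'\mapsto L(u',v)$ is an affine function of $u'$ lying above the concave function $G$ and touching it at $u'=v$, hence it must be the tangent $G(v)+G'(v)(u'-v)$ whenever $G$ is differentiable.
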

Indeed, for all approaches in Example~\ref{ex:example_in_introduction}, we can find $F$ and $g$ as required by Lemma~\ref{lemma:bregman_form_of_loss}, see~\citep[Table~1]{menon2016linking}.
We refer to~\citep{menon2016linking} for a recipe to construct new losses for Eq.~\eqref{eq:regularized_Bregman_objective} satisfying the requirements of Lemma~\ref{lemma:bregman_form_of_loss}.
\begin{example}
\label{ex:associated_loss_functions}%
~
\begin{itemize}
    \item KuLSIF~\citep{kanamori2009least} uses $\ell(-1,v)=\frac{1}{2}v^2, \ell(1,v)=-v$ and $\Psi(v)=\frac{v}{1-v}$.
    \item LR~\citep{bickel2009discriminative} uses $\ell(-1,v)=\log(1+e^v), \ell(1,v)=\log(1+e^{-v})$ and the link function $\Psi(v)=-\log(\frac{1}{v}-1)$.
    \item Exp~\citep{menon2016linking} uses $\ell(-1,v)=e^v, \ell(1,v)=e^{-v}$ and the link function $\Psi(v)=-\frac{1}{2}\log(\frac{1}{v}-1)$.
    \item SQ~\citep{menon2016linking} uses $\ell(-1,v)=(1+v)^2, \ell(1,v)=(1-v)^2$ and the link function $\Psi(v)=\frac{v+1}{2}$. 
\end{itemize}
\end{example}
According to Lemma~\ref{lemma:bregman_form_of_loss}, Eq.~\eqref{eq:regularized_Bregman_objective} can be expressed as regularized risk minimization objective, given $g$ and $F$ are constructed as specified in Lemma~\ref{lemma:bregman_form_of_loss}.
This motivates the estimation of Eq.~\eqref{eq:regularized_Bregman_objective} by an empirical risk minimization
\begin{align}
    \label{eq:empirical_loss_objective}
    \min_{f\in\mathcal{H}} \frac{1}{m+n}\sum_{(x,y)\in\z} \ell(y,f(x))+ \frac{\lambda}{2}\norm{f}^2.
\end{align}
Throughout this work, we will denote a minimizer of Eq.~\eqref{eq:empirical_loss_objective} by $f_{\z}^{\lambda}$.
The goal of this work is to analyze the error $B(f_{\z}^{\lambda})=B_F(\beta,g(f_{\z}^{\lambda}))$ of estimating $\beta$ by $g(f_{\z}^{\lambda})$.

\subsection{Learning in RKHS with Convex Losses}
\label{subsec:self_concordance}

Lemma~\ref{lemma:bregman_form_of_loss} identifies the methods in Example~\ref{ex:example_in_introduction} as risk minimization problems with loss functions as detailed in Example~\ref{ex:associated_loss_functions}.
We will see in Section~\ref{sec:error_bounds} that these loss functions satisfy the following property when applied to functions $f\in\mathcal{H}$ from an RKHS $\mathcal{H}$.%
\begin{assumption}[generalized self-concordance~\citep{marteau2019beyond}]%
    \label{ass:generalized_self_concordance}%
    \sloppy
    For any $z=(x,y)\in \X\times\Y$, the function $\ell_z:\mathcal{H}\to\mathbb{R}, \ell_z(f):=\ell(y,f(x)), f\in\mathcal{H}$, is convex, three times differentiable, and it exists a set $\varphi(z)\subset\mathcal{H}$ such that
    \begin{align}
    \label{eq:self_concordant}
        \forall f\in\mathcal{H},\forall h,k\in\mathcal{H}: |\nabla^3 \ell_z(f)[k,h,h]|
        \leq \nabla^2 \ell_z(f)[h,h]\cdot \sup_{p\in\varphi(z)} |k\cdot p|.%
    \end{align}%
\end{assumption}%
The definition of self-concordance originates from the analysis of Newton's method~\citep{nesterov1994interior} and has been generalized in~\citep{bach2010self,marteau2019beyond}.
Before we review error bounds, we follow~\citet{marteau2019beyond} and make the following technical assumptions.
\begin{assumption}[technical assumptions]
\label{ass:boundedness_defin0_existence}
For self-concordant $\ell_z(\cdot)$ with the set $\varphi(z)\subset\mathcal{H}$ in Eq.~\eqref{eq:self_concordant}, it holds:
\begin{enumerate}
    \item
    \sloppy
    There exists $R\geq 0$ such that for any realization $z$ of the random variable $Z$, $\sup_{g\in\varphi(z)}\norm{g}\leq R$ almost surely.
    \item The following quantities are almost surely bounded for any realization $z$ of the random variable $Z$: $|\ell_z(0)|, \norm{\nabla \ell_z(0)}, \mathrm{Tr}(\nabla^2 \ell_z(0))$.
    \item There exists a minimizer $f_{\mathcal{H}}:=\argmin_{f\in\mathcal{H}} \mathcal{R}(f)$.
\end{enumerate}
\end{assumption}
In the sequel, the symbols $\varphi(Z)$ and $\ell_Z$ mean the varieties of sets $\varphi(z)$ and functions $\ell_z$ corresponding to the realizations $z$ of the random variable $Z$.
The first point of Assumption~\ref{ass:boundedness_defin0_existence} is standard in learning theory~\citep{caponnetto2007optimal,steinwart2009optimal} and, together with point~2, assures that $\mathcal{R}(f)$ is well defined for all $f\in\mathcal{H}$.
Point 3 imposes that our model is \textit{well specified} in the sense that the function class $\mathcal{H}$ contains the minimizer of $\mathcal{R}$.
Under these assumptions, Theorem~3 in~\citep{marteau2019beyond} holds, which we state in our notation as follows:
\begin{lemma}
[\citet{marteau2019beyond}, Theorem~3]
\label{lemma:marteau_theorem3}
    Let Assumptions~\ref{ass:iid_of_classification_data},~\ref{ass:generalized_self_concordance},~\ref{ass:boundedness_defin0_existence} be satisfied, $\delta\in (0,\frac{1}{2}]$, and define $B_1$  and $B_2$ as 
    \begin{align*}
        B_1:=\sup_{\norm{f}\leq\norm{f_{\mathcal{H}}}}\sup_{z\in\mathrm{supp}(\rho)}\norm{\nabla\ell_z(f)},\quad
        B_2:=\sup_{\norm{f}\leq\norm{f_{\mathcal{H}}}}\sup_{z\in\mathrm{supp}(\rho)}\mathrm{Tr}(\nabla^2\ell_z(f)).
    \end{align*}
    Then, whenever $\lambda\leq B_2$ and $m+n$ is larger than
    \begin{align}
        \label{eq:sample_size_condition_lemma_marteau}
        \max\left\{512\norm{f_{\mathcal{H}}}^2R^2\log\left(\frac{2}{\delta}\right), 512 \log\left(\frac{2}{\delta}\right),24\frac{B_2}{\lambda}\log\left(\frac{8 B_2}{\lambda\delta}\right), 256\frac{R^2 B_1^2}{\lambda^2}\log\left(\frac{2}{\delta}\right)\right\},
    \end{align}
    a minimizer $f_{\z}^{\lambda}$ of Eq.~\eqref{eq:empirical_loss_objective} satisfies, with probability at least $1-2\delta$,
    \begin{align}
        \label{eq:marteau_bound_theorem2}
        \mathcal{R}(f_{\z}^{\lambda})-\mathcal{R}(f_{\mathcal{H}})\leq \frac{84 B_1^2}{\lambda (m+n)}\log\left(\frac{2}{\delta}\right)+2\lambda \norm{f_{\mathcal{H}}}^2.
    \end{align}
\end{lemma}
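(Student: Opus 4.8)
The plan is to follow the self-concordant analysis of regularized empirical risk minimization in an RKHS. Write $\mathcal{R}_\lambda(f) := \mathcal{R}(f) + \frac{\lambda}{2}\norm{f}^2$ for the regularized population risk, $\widehat{\mathcal{R}}_\lambda$ for its empirical counterpart (so that $f_{\z}^{\lambda} = \argmin_f \widehat{\mathcal{R}}_\lambda(f)$), and introduce the regularized population minimizer $f_\lambda := \argmin_f \mathcal{R}_\lambda(f)$ as an intermediate reference point. Both functionals are strictly convex (the maps $\ell_z$ are convex by Assumption~\ref{ass:generalized_self_concordance} and the penalty is $\lambda$-strongly convex), so these minimizers are unique and characterized by vanishing gradients. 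Split the error as $\mathcal{R}(f_{\z}^{\lambda}) - \mathcal{R}(f_{\mathcal{H}}) = \big[\mathcal{R}(f_{\z}^{\lambda}) - \mathcal{R}(f_\lambda)\big] + \big[\mathcal{R}(f_\lambda) - \mathcal{R}(f_{\mathcal{H}})\big]$. The second (approximation) term is bounded by $\lambda\norm{f_{\mathcal{H}}}^2/2 \le 2\lambda\norm{f_{\mathcal{H}}}^2$ using $\mathcal{R}_\lambda(f_\lambda) \le \mathcal{R}_\lambda(f_{\mathcal{H}})$; this produces the second summand of~\eqref{eq:marteau_bound_theorem2}.

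For the first (estimation) term, the central objects are the regularized population Hessian $\Hess_\lambda := \nabla^2 \mathcal{R}(f_\lambda) + \lambda I$ and the gradient mismatch at $f_\lambda$, namely $v := \nabla \widehat{\mathcal{R}}_\lambda(f_\lambda) = \nabla\widehat{\mathcal{R}}_\lambda(f_\lambda) - \nabla\mathcal{R}_\lambda(f_\lambda)$ (the subtraction being free since $\nabla\mathcal{R}_\lambda(f_\lambda) = 0$). First I would control $\norm{\Hess_\lambda^{-1/2} v}$: it is an average of $m+n$ i.i.d.\ centered Hilbert-space vectors, and a Bernstein inequality in Hilbert space — with the variance bounded through $B_1$ and the trace bound from Assumption~\ref{ass:boundedness_defin0_existence} — gives $\norm{\Hess_\lambda^{-1/2} v}^2 \lesssim \frac{B_1^2}{\lambda(m+n)}\log(2/\delta)$ on an event of probability at least $1-\delta$. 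Second, an operator Bernstein argument (this is where the lower bounds $24\frac{B_2}{\lambda}\log(\frac{8B_2}{\lambda\delta})$ and $256\frac{R^2B_1^2}{\lambda^2}\log(\frac{2}{\delta})$ in~\eqref{eq:sample_size_condition_lemma_marteau} are needed) shows that, uniformly on an $\Hess_\lambda$-ball of fixed radius around $f_\lambda$, the empirical Hessian $\nabla^2\widehat{\mathcal{R}}_\lambda$ is comparable to $\Hess_\lambda$ up to constant factors; the self-concordance inequality~\eqref{eq:self_concordant} together with $\sup_{g\in\varphi(z)}\norm{g}\le R$ is what lets one transport this Hessian comparison along line segments and hence control the nonlinearity inside the neighborhood.

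With these two ingredients the localized Newton analysis proceeds: self-concordance implies that along the segment from $f_\lambda$ to $f_{\z}^{\lambda}$ the Hessian of $\widehat{\mathcal{R}}_\lambda$ stays within a constant factor of $\Hess_\lambda$, so the gradient map is a near-isometry in the $\Hess_\lambda$-metric and one obtains $\norm{\Hess_\lambda^{1/2}(f_{\z}^{\lambda} - f_\lambda)} \lesssim \norm{\Hess_\lambda^{-1/2} v}$ — valid once the right-hand side is small enough, which the remaining terms $512\norm{f_{\mathcal{H}}}^2R^2\log(\frac{2}{\delta})$ and $512\log(\frac{2}{\delta})$ of~\eqref{eq:sample_size_condition_lemma_marteau} guarantee (they also keep $f_\lambda$ itself inside the self-concordant region so the argument can be initialized). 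Finally, Taylor-expanding $\mathcal{R}$ at $f_\lambda$ and using $\nabla\mathcal{R}(f_\lambda) = -\lambda f_\lambda$ yields $\mathcal{R}(f_{\z}^{\lambda}) - \mathcal{R}(f_\lambda) \lesssim \norm{\Hess_\lambda^{1/2}(f_{\z}^{\lambda}-f_\lambda)}^2 + \lambda\norm{f_\lambda}\,\norm{f_{\z}^{\lambda}-f_\lambda} \lesssim \frac{B_1^2}{\lambda(m+n)}\log(\frac{2}{\delta}) + \lambda\norm{f_{\mathcal{H}}}^2$, and combining with the approximation bound gives~\eqref{eq:marteau_bound_theorem2} after union-bounding the two probabilistic events (hence probability $1-2\delta$) and absorbing constants. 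I expect the main obstacle to be the second step: establishing the uniform Hessian comparison on a neighborhood of the right radius via operator Bernstein combined with~\eqref{eq:self_concordant}, and carefully tracking how the radius, $R$, $B_1$, $B_2$ and the required lower bound on $m+n$ interlock so that the localized Newton step closes.
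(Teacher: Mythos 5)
The paper does not actually prove this statement: Lemma~\ref{lemma:marteau_theorem3} is imported verbatim (up to notation, with the sample size written as $m+n$) from Theorem~3 of \citet{marteau2019beyond}, and the surrounding text only checks that Assumptions~\ref{ass:iid_of_classification_data}, \ref{ass:generalized_self_concordance} and \ref{ass:boundedness_defin0_existence} place us in the setting of that theorem. So there is no in-paper proof to compare yours against line by line. That said, your outline is a faithful reconstruction of how the cited result is actually established: the split through the regularized population minimizer $f_\lambda$, the bound $\mathcal{R}(f_\lambda)-\mathcal{R}(f_{\mathcal{H}})\leq\frac{\lambda}{2}\norm{f_{\mathcal{H}}}^2$ (which also gives $\norm{f_\lambda}\leq\norm{f_{\mathcal{H}}}$, needed so that $B_1$ and $B_2$ control the gradients and Hessians along the relevant segment), the Bernstein bound on $\norm{\Hess_\lambda^{-1/2}\nabla\widehat{\mathcal{R}}_\lambda(f_\lambda)}$, the operator concentration of the empirical Hessian, and the self-concordance-based localization that makes the Newton decrement control the excess risk. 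Two caveats. First, what you have is a strategy sketch at the level of $\lesssim$: the specific constants $84$ and $2$ in Eq.~\eqref{eq:marteau_bound_theorem2}, and the way the four lower bounds in Eq.~\eqref{eq:sample_size_condition_lemma_marteau} are each consumed by a particular step, are exactly the content that a complete proof must supply, and none of it is derived here; a referee could not certify the stated constants from your argument. Second, your final Taylor expansion produces a cross term $\lambda\norm{f_\lambda}\,\norm{f_{\z}^{\lambda}-f_\lambda}$ that must be absorbed by AM--GM into the two displayed summands; this works (and is consistent with the approximation term carrying the loose factor $2$ rather than $\tfrac12$), but you should say so explicitly, since otherwise the claimed form of the bound does not follow from the inequalities you wrote. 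If the intent is merely to justify quoting the lemma, the paper's route (direct citation after verifying the hypotheses) is the appropriate one; if the intent is a self-contained proof, the second step you flag yourself --- the uniform Hessian comparison on a $\Hess_\lambda$-ball via operator Bernstein plus Eq.~\eqref{eq:self_concordant} --- is indeed where all the remaining work lives.
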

Lemma~\ref{lemma:marteau_theorem3} is a general result, which does not take into account cases where a regularity of $f_\mathcal{H}$ is assumed beyond $f_\mathcal{H}\in\mathcal{H}$ in Assumption~\ref{ass:boundedness_defin0_existence}.
Such regularity can be encoded by so-called \textit{source conditions} as used in the regularization theory for inverse problems~\citep{engl1996regularization,bauer2007regularization,caponnetto2007optimal,steinwart2009optimal,rudi2017generalization,blanchard2018optimal}.%
\begin{assumption}[source condition]
    \label{ass:source_condition}
    There exist $r\in (0,\frac{1}{2}], v\in\mathcal{H}$ such that $f_{\mathcal{H}}=\Hess(f_{\mathcal{H}})^r v$ with the expected Hessian $\Hess(f):=\mathbb{E}[\nabla^2 \ell_Z(f)]$.
\end{assumption}
In the case of square loss, Assumption~\ref{ass:source_condition} recovers the polynomial source condition $f_{\mathcal{H}}=T^r v$ with covariance operator $T=\int_\mathcal{X} k(z,\cdot)\otimes k(z,\cdot)\diff \rho(z)$, for the kernel $k$ of $\mathcal{H}$ and the rank-$1$ operator $(u\otimes v)[f]=\langle u,f\rangle v$.
Starting from $r=0$ (assuming only $f_{\mathcal{H}}\in\mathcal{H}$), the regularity of $f_\mathcal{H}$ increases with increasing $r$, essentially reducing the number of eigenvectors of $T$ needed to well approximate $f_\mathcal{H}$.

It is well known in learning theory~\citep{zhang2002effective,caponnetto2005empirical,caponnetto2007optimal} that the rate of convergence of regularized learning algorithms is not only influenced by the regularity of $f_\mathcal{H}$, but also by the \textit{capacity} of the space $\mathcal{H}$ measured by the so-called \textit{effective dimension}.
We therefore make the following assumption of~\citep[Assumption~7]{marteau2019beyond}, which generalizes~\citep[Definition~1]{caponnetto2007optimal}.%
\begin{assumption}[capacity condition]
    \label{ass:capacity_condition}
    There exist $\alpha\geq 1, Q_0\geq 0$ such that $\mathrm{df}_\lambda\leq Q_0 \lambda^{-\frac{1}{\alpha}}$ with the degrees of freedom
    \begin{align}
        \mathrm{df}_\lambda:=\mathbb{E}\left[\norm{\Hess_\lambda(f_\mathcal{H})^{-\frac{1}{2}}\nabla\ell_Z(f_\mathcal{H})}^2\right]
    \end{align}
    and $\Hess_\lambda(f):=\Hess(f)+\lambda I$.
\end{assumption}
The \textit{degrees of freedom} term $\mathrm{df}_\lambda$ appears as \textit{Fisher information} in the asymptotic analysis of M-estimation~\citep{van2000asymptotic,lehmann2006theory}, appears similarly in spline smoothing~\citep{wahba1990spline}, and reduces to the \textit{effective dimension}~\citep{caponnetto2005empirical,caponnetto2007optimal} $\mathrm{df}_\lambda=\mathrm{Tr}(T T_\lambda^{-1})$ with $T_\lambda:=T+\lambda I$ for the square loss.
In the latter case, Assumption~\ref{ass:capacity_condition} is equivalent to the condition $\sigma_j=O(j^{-\alpha})$ for a sequence $\sigma_1,\sigma_2,\ldots$ of non-increasing positive eigenvalues of $T$~\citep[Theorem~5]{guo2022capacity}.
That is, a bigger $\alpha$ implies that fewer eigenvectors are needed to approximate elements from the image space of $T$ with a given capacity.
On the other hand, from a regularization theory perspective,
Assumption~\ref{ass:capacity_condition} can be interpreted as specifying the \textit{ill-posedness} of the problem of inverting $T$ (cf.~\citep{blanchard2018optimal}).

Under Assumptions~\ref{ass:source_condition} and~\ref{ass:capacity_condition}, the following result is stated in~\citep{marteau2019beyond} and rephrased here in our notation.
See Subsection~\ref{subsec:proofs_balancing_principle_empirical} for more details on the derivation.%
\begin{lemma}[{\citet{marteau2019beyond}, Theorem~38}]
    \label{lemma:marteau_main_result}
    Let Assumptions~\ref{ass:iid_of_classification_data},~\ref{ass:generalized_self_concordance}--\ref{ass:capacity_condition} be satisfied, $\delta\in(0,\frac{1}{2}]$, and define $B_1$ and $B_2$ by
    \begin{align*}
        B_1^\ast:=\sup_{z\in\mathrm{supp}(\rho) } \mathrm{Tr}(\nabla \ell_z(f_\mathcal{H})),\quad B_2^\ast:=\sup_{z\in\mathrm{supp}(\rho) } \mathrm{Tr}(\nabla^2 \ell_z(f_\mathcal{H})),\quad  L:=\norm{f_\mathcal{H}}_{\Hess^{-2 r}(f_\mathcal{H})},
    \end{align*}
    where $\norm{f}_{\mathbf{A}}:=\norm{\mathbf{A}^{\frac{1}{2}}f}$.
    Whenever $0<\lambda\leq \min\{B_2^\ast,(2 L R \log\frac{2}{\delta})^{-1/r}, Q_0^{2\alpha} (B_2^\ast)^\alpha (B_1^\ast)^{-2 \alpha}\}$ and $m+n$ is larger than
    \begin{align}
        \label{eq:sample_size_condition_marteau_thm38}
        \max\left\{5184 \frac{B_2^\ast}{\lambda}\log\left(\frac{8 \cdot 414^2 B_2^\ast}{\lambda \delta}\right), \frac{1296 Q_0^2}{L^2 \lambda^{1+2 r+1/\alpha}} \right\},
    \end{align}
    then a minimizer $f_{\z}^{\lambda}$ of Eq.~\eqref{eq:empirical_loss_objective} satisfies, with probability at least $1-2\delta$,
    \begin{align}
        \mathcal{R}(f_{\z}^{\lambda})-\mathcal{R}(f_{\mathcal{H}})
        &\leq
        \norm{f_{\z}^{\lambda}-f_{\mathcal{H}}}_{\Hess(f_{\mathcal{H}})}^2
        \leq
        \norm{f_{\z}^{\lambda}-f_{\mathcal{H}}}_{\Hess_\lambda(f_{\mathcal{H}})}^2\nonumber\\
        &\leq
        414 \frac{Q_0^2}{(m+n) \lambda^{1/\alpha}}\log\left(\frac{2}{\delta}\right) + 414 L^2 \lambda^{1+2 r}
        \label{eq:marteau_bound_theorem38}.
    \end{align}
\end{lemma}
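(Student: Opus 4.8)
The plan is to establish Eq.~\eqref{eq:marteau_bound_theorem38} by refining the argument behind Lemma~\ref{lemma:marteau_theorem3} with the source and capacity conditions, following the localized generalized--self-concordance analysis of \citet{marteau2019beyond}. Of the three inequalities, the middle one $\norm{f_{\z}^{\lambda}-f_{\mathcal{H}}}_{\Hess(f_{\mathcal{H}})}^2\leq\norm{f_{\z}^{\lambda}-f_{\mathcal{H}}}_{\Hess_\lambda(f_{\mathcal{H}})}^2$ is immediate from $\Hess_\lambda(f_{\mathcal{H}})=\Hess(f_{\mathcal{H}})+\lambda I\succeq\Hess(f_{\mathcal{H}})$; the left one follows by integrating the second-order Taylor remainder of $\mathcal{R}$ at $f_{\mathcal{H}}$, where the first-order term vanishes since $\nabla\mathcal{R}(f_{\mathcal{H}})=0$ (Assumption~\ref{ass:boundedness_defin0_existence}(3)) and Assumption~\ref{ass:generalized_self_concordance} keeps $\Hess(f)$ along $[f_{\mathcal{H}},f_{\z}^{\lambda}]$ within a bounded factor of $\Hess(f_{\mathcal{H}})$, which yields exactly $\norm{f_{\z}^{\lambda}-f_{\mathcal{H}}}_{\Hess(f_{\mathcal{H}})}^2$ (valid only in the localized regime discussed below). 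Everything thus reduces to bounding $\norm{f_{\z}^{\lambda}-f_{\mathcal{H}}}_{\Hess_\lambda(f_{\mathcal{H}})}^2$.

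For this I would introduce the population regularized minimizer $f_\lambda:=\argmin_{f\in\mathcal{H}}\mathcal{R}(f)+\tfrac{\lambda}{2}\norm{f}^2$ and split, by the triangle inequality for $\norm{\cdot}_{\Hess_\lambda(f_{\mathcal{H}})}$,
\[
\norm{f_{\z}^{\lambda}-f_{\mathcal{H}}}_{\Hess_\lambda(f_{\mathcal{H}})}\leq\norm{f_{\z}^{\lambda}-f_{\lambda}}_{\Hess_\lambda(f_{\mathcal{H}})}+\norm{f_{\lambda}-f_{\mathcal{H}}}_{\Hess_\lambda(f_{\mathcal{H}})}
\]
into a variance and a bias term. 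For the bias term I would linearize the optimality conditions $\nabla\mathcal{R}(f_\lambda)+\lambda f_\lambda=0$ and $\nabla\mathcal{R}(f_{\mathcal{H}})=0$, bounding the linearization error by self-concordance, to get $f_\lambda-f_{\mathcal{H}}\approx-\lambda\,\Hess_\lambda(f_{\mathcal{H}})^{-1}f_{\mathcal{H}}$ and hence $\norm{f_\lambda-f_{\mathcal{H}}}_{\Hess_\lambda(f_{\mathcal{H}})}\lesssim\lambda\,\norm{\Hess_\lambda(f_{\mathcal{H}})^{-1/2}f_{\mathcal{H}}}$; inserting the source condition $f_{\mathcal{H}}=\Hess(f_{\mathcal{H}})^{r}v$ (Assumption~\ref{ass:source_condition}) and the scalar bound $t^{r}(t+\lambda)^{-1/2}\leq\lambda^{r-1/2}$ for $0<r\leq\tfrac12$ gives $\norm{f_\lambda-f_{\mathcal{H}}}_{\Hess_\lambda(f_{\mathcal{H}})}^2\lesssim\lambda^{1+2r}\norm{v}^2=\lambda^{1+2r}L^2$, the second term of Eq.~\eqref{eq:marteau_bound_theorem38}. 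For the variance term I would subtract the empirical and population gradient equations at $f_\lambda$, invert the self-concordantly localized regularized Hessian, and reduce to $\norm{\Hess_\lambda(f_{\mathcal{H}})^{-1/2}\big(\tfrac{1}{m+n}\sum_{z\in\z}\nabla\ell_z(f_\lambda)-\nabla\mathcal{R}(f_\lambda)\big)}$; a Bernstein inequality for $\mathcal{H}$-valued i.i.d.\ sums (Assumption~\ref{ass:iid_of_classification_data}) bounds its square, up to a $\log\tfrac2\delta$ factor, by a quantity of order $\mathrm{df}_\lambda/(m+n)$, which Assumption~\ref{ass:capacity_condition} together with the smallness of $\lambda$ turns into the first term $\tfrac{Q_0^2}{(m+n)\lambda^{1/\alpha}}\log\tfrac2\delta$ of Eq.~\eqref{eq:marteau_bound_theorem38}.

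The delicate point, and the origin of the upper bounds on $\lambda$ and the lower bound on $m+n$, is \emph{localization}: each linearization above is legitimate only while $f_\lambda$ and $f_{\z}^{\lambda}$ stay in the Dikin-type neighborhood of $f_{\mathcal{H}}$ on which the per-sample Hessians $\nabla^2\ell_z$, the population Hessian $\Hess$, and the empirical Hessian are mutually comparable; this is precisely what Assumption~\ref{ass:generalized_self_concordance} (through $R$ and $\sup_{p\in\varphi(z)}|k\cdot p|$) supplies once the relevant displacements are small in $\mathcal{H}$-norm. Tracing this through, $\lambda\leq(2LR\log\tfrac2\delta)^{-1/r}$ forces $R\,\norm{f_\lambda-f_{\mathcal{H}}}\lesssim RL\lambda^{r}$ to be small; $\lambda\leq Q_0^{2\alpha}(B_2^\ast)^\alpha(B_1^\ast)^{-2\alpha}$ (equivalently $(B_1^\ast)^2/B_2^\ast\lesssim Q_0^2\lambda^{-1/\alpha}$) together with the first term $5184\,\tfrac{B_2^\ast}{\lambda}\log(\cdots)$ of Eq.~\eqref{eq:sample_size_condition_marteau_thm38} makes the empirical Hessian $\tfrac12$-close to $\Hess_\lambda(f_{\mathcal{H}})$ with probability $\geq1-\delta$ via an operator Bernstein inequality; and $m+n\geq\tfrac{1296 Q_0^2}{L^2\lambda^{1+2r+1/\alpha}}$ guarantees the variance term does not exceed the bias term, so that $f_{\z}^{\lambda}$ indeed remains in the neighborhood. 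I expect this bookkeeping---propagating the ``stays-in-the-ellipsoid'' invariant through the deterministic bias estimate and the two high-probability events, then merging the events by a union bound into the $1-2\delta$ statement while tracking the absolute constants---to be the main obstacle; the scalar exponent algebra producing $\lambda^{1+2r}$ and $\lambda^{-1/\alpha}$ is routine given Assumptions~\ref{ass:source_condition} and~\ref{ass:capacity_condition}. Substituting the resulting bound on $\norm{f_{\z}^{\lambda}-f_{\mathcal{H}}}_{\Hess_\lambda(f_{\mathcal{H}})}^2$ back into the Taylor estimate for $\mathcal{R}(f_{\z}^{\lambda})-\mathcal{R}(f_{\mathcal{H}})$ closes the chain, with the constant $414$ emerging from combining the constants of the bias estimate, the Bernstein tails, and the Hessian-comparison factors.
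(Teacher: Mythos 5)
Your proposal follows essentially the same route as the paper: the paper's own proof consists of exactly your first paragraph (the second-order self-concordance bound of Lemma~\ref{lemma:marteau_prop15} with $\nabla\mathcal{R}(f_\mathcal{H})=0$, the factor $\psi(\ttt(f_\z^\lambda-f_\mathcal{H}))\leq\psi(2\log 2)\leq 1$ obtained from the localization in Lemma~\ref{lemma:t_concentration}, and $\Hess(f_\mathcal{H})\preccurlyeq\Hess_\lambda(f_\mathcal{H})$), after which it simply cites Theorem~38 of \citet{marteau2019beyond} for the bound on $\norm{f_\z^\lambda-f_\mathcal{H}}_{\Hess_\lambda(f_\mathcal{H})}^2$. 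Your bias--variance split through the population regularized minimizer, the source-condition estimate giving $\lambda^{1+2r}L^2$, and the Bernstein/degrees-of-freedom variance term are a faithful sketch of what that cited proof does internally, so there is no substantive difference in approach.
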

In the special case of square loss (SQ), it is possible to find $\lambda$ in
Lemma~\ref{lemma:marteau_main_result} that recovers the minimax optimal error rate discovered in~\citep{caponnetto2007optimal}; see~\citep[Corollary~39]{marteau2019beyond}.
However, \citep[Corollary~39]{marteau2019beyond} uses an a priori choice of $\lambda$ that depends on the regularity level $r$, which cannot be computed from the available samples.

\section{Error Rates under Regularity and Capacity}
\label{sec:error_bounds}

Let us start by Assumption~\ref{ass:iid_of_classification_data}.
The goal of this section is to analyze the error of a density ratio model $g(f_{\z}^{\lambda})$ with a minimizer $f_{\z}^{\lambda}$ of Eq.~\eqref{eq:empirical_loss_objective}, in terms of the data fidelity $B(f):=B_F(\beta, g(f))$ as specified in Eq.~\eqref{eq:Bregman_generator_and_density_ratio_model}.
We note that this setting is general, in the sense that all methods in Example~\ref{ex:example_in_introduction} are included.

Our analysis is based on the observation that all loss functions in Example~\ref{ex:associated_loss_functions} are self-concordant as defined in Assumption~\ref{ass:generalized_self_concordance}.%
\begin{example}
\label{ex:self-concordant_losses}
~
    \begin{itemize}
    \item KuLSIF~\citep{kanamori2009least} uses the losses $\ell_{(x,-1)}(f)=\frac{1}{2}f(x)^2, \ell_{(x,1)}(f)=-f(x)$ which are three times differentiable and Eq.~\eqref{eq:self_concordant} holds with $\varphi((x,y))=\{0\}\subset\mathcal{H}$.
    \item LR~\citep{bickel2009discriminative} is self-concordant with $\varphi((x,y))=\{y K_x\}$, see~\citep[Example~2]{marteau2019beyond} and~\citep{bach2010self}.
    \item Exp~\citep{menon2016linking} uses $\ell_{(x,y)}(f)=e^{y f(x)}$ which is self-concordant with $\varphi((x,y))=\{y K_x\}$.
    \item SQ~\citep{menon2016linking} uses losses which are three times differentiable and Eq.~\eqref{eq:self_concordant} holds with $\varphi((x,y))=\{0\}\subset\mathcal{H}$.
\end{itemize}
\end{example}
Before discussing faster error rates with more advanced regularity concepts, let us discuss the special role of the parameter $\lambda$ when chosen to balance the two terms in Eq.~\eqref{eq:marteau_bound_theorem2}.
As an intermediate result, we will provide first error rates for regularized losses beyond the KuLSIF approach in Eq.~\eqref{eq:kulsif}.

\subsection{Slow Error Rate with A Priori Parameter Choice}
\label{subsec:simple_error_bound}

We have observed above that many methods for density ratio estimation are based on self-concordant loss functions.
From Lemma~\ref{lemma:marteau_theorem3} we know that, for large enough sample size $m+n$, with probability larger than $1-2\delta$, a minimizer $f_{\z}^{\lambda}$ of Eq.~\eqref{eq:empirical_loss_objective}
satisfies the following bound:
\begin{align}
    \label{eq:bias_variance_simple}
    B(f_{\z}^{\lambda})-B(f_\mathcal{H})\leq S(m+n,\delta,\lambda)+A(\lambda),
\end{align}
where $$S(m+n,\delta,\lambda):=168 \frac{B_1^2}{\lambda (m+n)}\log\!\left(\frac{2}{\delta}\right) \quad \text{ and } \quad A(\lambda):=4\lambda \norm{f_\mathcal{H}}^2.$$
The question is: \textit{how to choose $\lambda$?}

One strategy in learning theory is to
use the value $\lambda^\ast$ which \textit{balances}~\citep{lepskii1991problem,goldenshluger2000adaptive,birge2001alternative,mathe2006lepskii,de2010adaptive,mucke2018adaptivity,blanchard2019lepskii,lu2020balancing,zellinger2021balancing}
\begin{align}
\label{eq:balance}
    \eta\cdot S(m+n,\delta,\lambda^\ast)=A(\lambda^\ast)
\end{align}
for some $\eta\geq 1$ ensuring that $\lambda^\ast$ is in the admissible set of values as, e.g., specified by Eq.~\eqref{eq:sample_size_condition_lemma_marteau}.
The value $\lambda^\ast$ in Eq.~\eqref{eq:balance} is achieved, due to the monotonicity of $S(m+n,\delta,\lambda)$ (decreasing) and $A(\lambda)$ (increasing, starting from $A(0)=0$).

The parameter choice $\lambda^\ast$ is particularly interesting, as it minimizes the right-hand side of Eq.~\eqref{eq:bias_variance_simple} up to a constant factor and, as a consequence, allows to achieve error rates of optimal order; see Subsection~\ref{subsec:quality_of_balancing_value}.%
\begin{prop}
    \label{prop:first_rate}
    Let Assumptions~\ref{ass:iid_of_classification_data}--\ref{ass:boundedness_defin0_existence} be satisfied, and let $\delta\in (0,\frac{1}{2}]$.
    Let further
    $\lambda^\ast$ be the solution of Eq.~\eqref{eq:balance} for $\eta:= \frac{256 R^2}{42}\norm{f_\mathcal{H}}^2$, which we assume to be larger than $1$.
    The there exists a quantity $C>0$ not depending on $m,n,\delta$,
    such that the minimizer $f_{\z}^{\lambda^\ast}$ of Eq.~\eqref{eq:empirical_loss_objective} satisfies
    \begin{align}
        \label{eq:balancing_value_rate}
        B(f_{\z}^{\lambda^\ast})-B(f_\mathcal{H})\leq 
        C (m+n)^{-\frac{1}{2}} \log^\frac{1}{2}\!\left(\frac{2}{\delta}\right)
    \end{align}
    with probability at least $1-2\delta$, for large enough sample size $m+n$, which depends on $\delta$.
\end{prop}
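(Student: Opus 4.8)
The plan is to specialize the slow-rate bound Eq.~\eqref{eq:bias_variance_simple} --- itself a consequence of Lemma~\ref{lemma:marteau_theorem3} combined with the identity $B(f)-B(f_\mathcal{H})=2(\mathcal{R}(f)-\mathcal{R}(f_\mathcal{H}))$ supplied by Lemma~\ref{lemma:bregman_form_of_loss} --- to the balancing value $\lambda=\lambda^\ast$ and then simplify the right-hand side. First I would solve Eq.~\eqref{eq:balance} in closed form. Writing $S(m+n,\delta,\lambda^\ast)=168\frac{B_1^2}{\lambda^\ast(m+n)}\log(\tfrac{2}{\delta})$ and $A(\lambda^\ast)=4\lambda^\ast\norm{f_\mathcal{H}}^2$, the equation $\eta\,S(m+n,\delta,\lambda^\ast)=A(\lambda^\ast)$ with $\eta=\tfrac{256R^2}{42}\norm{f_\mathcal{H}}^2$ gives $(\lambda^\ast)^2=\tfrac{256R^2B_1^2}{m+n}\log(\tfrac{2}{\delta})$, that is, $\lambda^\ast=\tfrac{16RB_1}{\sqrt{m+n}}\,\log^{1/2}(\tfrac{2}{\delta})$. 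The reason for this exact $\eta$ becomes visible here: it is precisely the value for which the fourth entry $256\frac{R^2B_1^2}{\lambda^2}\log(\tfrac{2}{\delta})$ of the sample-size condition Eq.~\eqref{eq:sample_size_condition_lemma_marteau} equals $m+n$ when $\lambda=\lambda^\ast$.

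Second, I would verify that $\lambda^\ast$ lies in the admissible range of Lemma~\ref{lemma:marteau_theorem3}. Since $\lambda^\ast\searrow 0$ as $m+n\to\infty$, the constraint $\lambda^\ast\le B_2$ holds once $m+n$ exceeds a $\delta$-dependent threshold. In Eq.~\eqref{eq:sample_size_condition_lemma_marteau}, the first two entries are constants in $m+n$ (they depend on $\delta$), the third entry $24\frac{B_2}{\lambda^\ast}\log(\tfrac{8B_2}{\lambda^\ast\delta})$ is of order $\sqrt{m+n}\,\log(m+n)=o(m+n)$ after substituting $\lambda^\ast$, and the fourth entry equals $m+n$; hence for all sufficiently large $m+n$ (again $\delta$-dependent) the maximum in Eq.~\eqref{eq:sample_size_condition_lemma_marteau} equals $m+n$ and the hypothesis of Lemma~\ref{lemma:marteau_theorem3} is satisfied. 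If one reads that hypothesis with a strict inequality, using $(1+\varepsilon)\lambda^\ast$ in place of $\lambda^\ast$ makes the fourth entry strictly below $m+n$ while changing only the final constant by a factor $(1+\varepsilon)$; I would mention this but not dwell on it.

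Third, I would combine everything. At $\lambda=\lambda^\ast$, Eq.~\eqref{eq:bias_variance_simple} together with the balancing identity and $\eta\ge 1$ yields
\[
B(f_{\z}^{\lambda^\ast})-B(f_\mathcal{H})\ \le\ S(m+n,\delta,\lambda^\ast)+A(\lambda^\ast)\ =\ \Bigl(1+\tfrac1\eta\Bigr)A(\lambda^\ast)\ \le\ 2A(\lambda^\ast)\ =\ 8\lambda^\ast\norm{f_\mathcal{H}}^2 .
\]
Substituting the closed-form value of $\lambda^\ast$ gives $B(f_{\z}^{\lambda^\ast})-B(f_\mathcal{H})\le 128\,RB_1\norm{f_\mathcal{H}}^2\,(m+n)^{-1/2}\log^{1/2}(\tfrac{2}{\delta})$, so $C:=128\,RB_1\norm{f_\mathcal{H}}^2$ works; since $R$ (from Assumption~\ref{ass:boundedness_defin0_existence}), $B_1$ and $\norm{f_\mathcal{H}}$ do not depend on $m,n,\delta$, this constant is admissible, which is exactly Eq.~\eqref{eq:balancing_value_rate}.

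There is no deep obstacle; the argument is essentially bookkeeping around Lemma~\ref{lemma:marteau_theorem3}. The only step that needs genuine care is the sample-size verification --- noting that the third entry of Eq.~\eqref{eq:sample_size_condition_lemma_marteau} itself depends on $m+n$ through $\lambda^\ast$ and must be shown to be $o(m+n)$, and handling the borderline equality of the fourth entry --- which is precisely why the statement carries the qualifier ``for large enough sample size $m+n$, which depends on $\delta$.''
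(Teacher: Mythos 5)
Your proposal is correct and follows essentially the same route as the paper's own proof: reduce to Lemma~\ref{lemma:marteau_theorem3} via the Bregman identity of Lemma~\ref{lemma:bregman_form_of_loss}, solve the balance equation in closed form to get $\lambda^\ast=\frac{16RB_1}{\sqrt{m+n}}\log^{1/2}(\frac{2}{\delta})$, check admissibility against Eq.~\eqref{eq:sample_size_condition_lemma_marteau}, and bound $S+A\leq 2A(\lambda^\ast)$ to arrive at the same constant $C=128\,RB_1\norm{f_\mathcal{H}}^2$. The only cosmetic difference is that you dispatch the third sample-size entry by an asymptotic $o(m+n)$ argument where the paper makes the threshold explicit via its technical Lemma~\ref{lemma:technical}; both suffice for the stated ``large enough $m+n$'' conclusion.
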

Explicit bounds on the sample size $m+n$ and the quantity $C$ are given in Eq.~\eqref{eq:sample_condition_a_priori_bound} and Eq.~\eqref{eq:exact_form_of_constant_a_priori_bound}, respectively.
The value for $\lambda^\ast$ specified in Proposition~\ref{prop:first_rate} and Eq.~\eqref{eq:exact_balance_value_simple_bound} can be computed \textit{a priori}.
That is, all information needed to achieve the error rate $(m+n)^{-\frac{1}{2}}$ is available.
However, this situation changes in Subsection~\ref{subsec:improved_error_bound} when the regularity of $f_\mathcal{H}$ is taken into account.

\begin{remark}
    Together with Example~\ref{ex:self-concordant_losses}, our Proposition~\ref{prop:first_rate} provides (to the best of our knowledge) the first error rate for regularized density ratio estimation methods beyond KuLSIF.
    At the same time, for {$B(f_{\z}^\lambda)=\norm{\beta-f_{\z}^\lambda}_{L^2(Q)}^2$} as used in KuLSIF, the error has been analyzed in~\citep{kanamori2012statistical} in terms of the so-called bracketing entropy $\gamma$ of $\mathcal{H}$.
    In our notation, this result tells us that, if $\gamma\in (0,2)$ and $\beta\in\mathcal{H}$ (so that $B(f_\mathcal{H})=0$), then, for $\epsilon$ satisfying $1-\frac{2}{2+\gamma} < \epsilon <1$ and $\lambda^{-1}=O\!\left(\min\{m,n\}^{1-\epsilon}\right)$, with probability $1-\delta$, it holds that
    \begin{align}
        \label{eq:guarantee_for_kulsif}
        B(f_{\z}^{\lambda})=O\!\left(\min\{m,n\}^{\epsilon-1}\right).
    \end{align}
    The rate in Eq.~\eqref{eq:guarantee_for_kulsif} is faster for smaller $\epsilon$ and tends, for RKHSs with $\gamma\to 2$, to its optimum $\min\{m,n\}^{-\frac{1}{2}}$, which is the same rate as given by our Proposition~\ref{prop:first_rate}.
    For $m\approx n$ as motivated by Assumption~\ref{ass:iid_of_classification_data}, our result therefore recovers the state of the art~\citep{kanamori2009least} for KuLSIF and RKHS with bracketing entropy $\gamma\to 2$.
    In the same case, the rate has been recently improved in~\citep{gizewski2022regularization} for regular $\beta$, which we discuss in the next Section~\ref{subsec:improved_error_bound}.
\end{remark}

\subsection{Fast Error Rate Requiring A Posteriori Parameter Choice}
\label{subsec:improved_error_bound}

The following result is based on Lemma~\ref{lemma:marteau_main_result} and it takes into account the unknown regularity of $f_\mathcal{H}$ by Assumption~\ref{ass:source_condition}.
\begin{prop}
    \label{prop:fast_rate}
    Let Assumptions~\ref{ass:iid_of_classification_data}--\ref{ass:capacity_condition} be satisfied for some $\alpha\leq \frac{1}{1-2 r}$, and let $\delta\in [\frac{2}{e^{1296}},\frac{1}{2}]$.
    Let further
    $\lambda^\ast$ be a solution of Eq.~\eqref{eq:balance} with $\eta:= 1296 \log^{-1}\!\left(\frac{2}{\delta}\right)$.
    Then there exists a quantity $C>0$ not depending on $m,n,\delta$, such that the minimizer $f_{\z}^{\lambda^\ast}$ of Eq.~\eqref{eq:empirical_loss_objective} satisfies
    \begin{align}
        \label{eq:balancing_value_rate_fast}
        B(f_{\z}^{\lambda^\ast})-B(f_\mathcal{H})\leq 
        C (m+n)^{-\frac{2 r\alpha+\alpha}{2 r \alpha+\alpha+1}}
    \end{align}
    with probability at least $1-2\delta$, for large enough sample size $m+n$, which depends on $\delta$.
\end{prop}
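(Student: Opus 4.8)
The plan is to obtain Eq.~\eqref{eq:balancing_value_rate_fast} by evaluating the oracle inequality of Lemma~\ref{lemma:marteau_main_result} at the balancing parameter $\lambda^\ast$, after using the identity $B(f)=2(\mathcal{R}(f)-\mathcal{R}(f^\ast))$ from Lemma~\ref{lemma:bregman_form_of_loss} to translate between the data fidelity $B$ and the excess risk $\mathcal{R}(\cdot)-\mathcal{R}(f_\mathcal{H})$. Concretely, I would read the two summands on the right-hand side of Eq.~\eqref{eq:marteau_bound_theorem38} (multiplied by the factor $2$ produced by that identity) as a variance term $S(m+n,\delta,\lambda):=\frac{828\,Q_0^2}{(m+n)\lambda^{1/\alpha}}\log\frac2\delta$ and a bias term $A(\lambda):=828\,L^2\lambda^{1+2r}$, which are respectively decreasing and increasing in $\lambda$ with $A(0)=0$, so that Eq.~\eqref{eq:balance} has a unique solution $\lambda^\ast$. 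The point of the prescribed $\eta=1296\log^{-1}(2/\delta)$ is that, since $\eta\log(2/\delta)=1296$, the balance condition $\eta\,S(m+n,\delta,\lambda^\ast)=A(\lambda^\ast)$ is equivalent to $m+n=\frac{1296\,Q_0^2}{L^2(\lambda^\ast)^{1+2r+1/\alpha}}$, i.e.\ $\lambda^\ast$ satisfies the second sample-size requirement of Eq.~\eqref{eq:sample_size_condition_marteau_thm38} with equality; this is also what pins down
\[
\lambda^\ast=\left(\frac{1296\,Q_0^2}{L^2\,(m+n)}\right)^{\frac{1}{1+2r+1/\alpha}}.
\]

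Granting for the moment that $\lambda^\ast$ is admissible for Lemma~\ref{lemma:marteau_main_result}, the rate follows by a short computation. The hypothesis $\delta\ge 2e^{-1296}$ gives $\log(2/\delta)\le 1296$ and hence $\eta\ge 1$, so at the balance point $S(m+n,\delta,\lambda^\ast)=A(\lambda^\ast)/\eta\le A(\lambda^\ast)$; Lemma~\ref{lemma:marteau_main_result} then yields, with probability at least $1-2\delta$,
\[
B(f_{\z}^{\lambda^\ast})-B(f_\mathcal{H})\ \le\ S(m+n,\delta,\lambda^\ast)+A(\lambda^\ast)\ \le\ 2A(\lambda^\ast)\ =\ 1656\,L^2(\lambda^\ast)^{1+2r}.
\]
Substituting $\lambda^\ast$ and simplifying the exponent by multiplying numerator and denominator by $\alpha$,
\[
(\lambda^\ast)^{1+2r}=\left(\frac{1296\,Q_0^2}{L^2(m+n)}\right)^{\frac{1+2r}{1+2r+1/\alpha}}=\left(\frac{1296\,Q_0^2}{L^2}\right)^{\frac{2r\alpha+\alpha}{2r\alpha+\alpha+1}}(m+n)^{-\frac{2r\alpha+\alpha}{2r\alpha+\alpha+1}},
\]
so collecting the factors independent of $m,n,\delta$ into $C:=1656\,L^2\big(1296\,Q_0^2/L^2\big)^{(2r\alpha+\alpha)/(2r\alpha+\alpha+1)}$ gives Eq.~\eqref{eq:balancing_value_rate_fast}.

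The step I expect to take the most care is verifying that $\lambda^\ast$ lies in the admissible range of Lemma~\ref{lemma:marteau_main_result}; this is where the hypotheses on $\delta$ and the relation $\alpha\le\frac{1}{1-2r}$ are used and from which the explicit ``large enough $m+n$'' threshold emerges. The second sample-size bound in Eq.~\eqref{eq:sample_size_condition_marteau_thm38} holds with equality by construction of $\lambda^\ast$. For the first one, $m+n\ge 5184\,\frac{B_2^\ast}{\lambda^\ast}\log\!\big(\frac{8\cdot 414^2 B_2^\ast}{\lambda^\ast\delta}\big)$, observe that $\alpha\le\frac{1}{1-2r}$ is equivalent to $2r+\frac1\alpha\ge 1$, so $1/\lambda^\ast$ grows at most like $(m+n)^{1/2}$; hence the right-hand side is $O\big((m+n)^{1/2}\log(m+n)\big)$ and the bound holds once $m+n$ exceeds a threshold depending on $\delta$ only through $\log(2/\delta)$. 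Finally $\lambda^\ast\downarrow 0$ as $m+n\to\infty$, so the upper bounds $\lambda^\ast\le\min\{B_2^\ast,(2LR\log\frac2\delta)^{-1/r},Q_0^{2\alpha}(B_2^\ast)^\alpha(B_1^\ast)^{-2\alpha}\}$ required by Lemma~\ref{lemma:marteau_main_result}, as well as $\eta\ge 1$ needed for the balancing principle, are met once $m+n$ is large enough. Turning these remarks into one explicit lower bound on $m+n$ and combining with the displayed estimate completes the argument; conceptually nothing is needed beyond inserting the balanced $\lambda^\ast$ into the oracle inequality.
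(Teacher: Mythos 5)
Your proposal is correct and follows essentially the same route as the paper's own proof: translate $B$ to excess risk via Lemma~\ref{lemma:bregman_form_of_loss}, solve the balance equation $\eta S=A$ for the oracle bound of Lemma~\ref{lemma:marteau_main_result} to get $\lambda^\ast=\bigl(1296\,Q_0^2/(L^2(m+n))\bigr)^{\alpha/(1+2r\alpha+\alpha)}$, use $\eta\geq 1$ (from $\delta\geq 2e^{-1296}$) to absorb the $\log(2/\delta)$ factor and obtain the same constant $C=1656\,L^2(1296\,Q_0^2/L^2)^{(2r\alpha+\alpha)/(2r\alpha+\alpha+1)}$, and use $\alpha\leq\frac{1}{1-2r}$ to show $1/\lambda^\ast=O((m+n)^{1/2})$ so the remaining admissibility conditions of Lemma~\ref{lemma:marteau_main_result} hold for large $m+n$. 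The only difference is that you leave the explicit sample-size threshold implicit, whereas the paper works it out via its technical Lemma~\ref{lemma:technical}; your sketch of that step is sound.
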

Explicit bounds on $m+n$ and $C$ are given in Eq.~\eqref{eq:requirement_sample_size_fast_rate_proposition} and Eq.~\eqref{eq:exact_form_of_constant_fast_bound}, respectively.
The value $\lambda^\ast$ in Proposition~\ref{prop:fast_rate}, as specified precisely in Eq.~\eqref{eq:lambda_star_in_fast_rate_proof}, depends on the regularity index $r$ and can therefore not be computed \textit{a priori}. A new method is required. 
\begin{remark}
    \label{remark:fast_rate}
    Together with Example~\ref{ex:self-concordant_losses}, Proposition~\ref{prop:fast_rate} provides (the to the best of our knowledge first) error rates result for regularized density ratio estimation methods that takes into account both, the regularity of the density ratio $\beta$ and the capacity of the space $\mathcal{H}$.
    As a result, even in the special case of KuLSIF with $B(f_{\z}^\lambda)=\norm{\beta-f_{\z}^\lambda}_{L^2(Q)}^2$, Proposition~\ref{prop:fast_rate} allows to refine existing results.
    To see this, consider the fastest error rate of~\citep{kanamori2012statistical} discovered for RKHS with bracketing entropy $\gamma\in(0,2)$ (cf.~Remark~1):
    \begin{align}
        B(f_{\z}^\lambda)=O\!\left(\min\{m,n\}^{-\frac{2}{2+\gamma}}\right).
    \end{align}
    This rate is improved by $O\!\left((m+n)^{-\frac{2 r\alpha+\alpha}{2 r \alpha+\alpha+1}}\right)$ in Eq.~\eqref{eq:balancing_value_rate_fast} whenever $\alpha (2 r+1)>\frac{2}{\gamma}$.
    We can also mention~\citep[Remark~3]{gizewski2022regularization} which leads to the error bound
    \begin{align}
        \label{eq:gizewski_fast_rate}
        B(f_{\z}^\lambda)-B(f_\mathcal{H})
        = O\!\left(\left(m^{-\frac{1}{2}}+n^{-\frac{1}{2}}\right)^{\frac{4r'+2}{2r'+2}}\right)
    \end{align}
    with $r'$ originating from a source condition similar to our Assumption~\ref{ass:source_condition} but for the covariance operator $T=\int_\mathcal{X} k(x,\cdot)\otimes k(x,\cdot)\diff Q(x)$ of $Q$ instead of $\rho$.
    The rate in Eq.~\eqref{eq:gizewski_fast_rate} is slower than the one in Eq.~\eqref{eq:balancing_value_rate_fast} for $\alpha\in(1,\infty)$ and $0<r'\leq r\leq\frac{1}{2}$.
    In general, for the square loss (SQ), the error rate in Eq.~\eqref{eq:balancing_value_rate_fast} is optimal in a minimax sense, see~\citep{caponnetto2007optimal,steinwart2009optimal,blanchard2018optimal}.
\end{remark}

\section{Balancing Principle using Approximation by Norm}
\label{sec:balancing_principle}

\subsection{Parameter Choice when the Norm is Known}
\label{subsec:balancing_principle_knonw_risk}

The value $\lambda^\ast$ in Proposition~\ref{prop:fast_rate} cannot be computed directly as it depends on the regularity of $\beta$ encoded in $r$; see Eq.~\eqref{eq:lambda_star_in_fast_rate_proof}.
One solution to this problem is to estimate $\lambda^\ast$ with Lepskii's \textit{balancing principle}~\citep{lepskii1991problem,goldenshluger2000adaptive,birge2001alternative,mathe2006lepskii,de2010adaptive,lu2020balancing,zellinger2021balancing}.
However, in general, no representation of $B(f_\z^\lambda)-B(f_\mathcal{H})$ as squared norm is available.
This prevents us from using the above referenced approaches directly, e.g., by using the norm estimates~\citep[Proposition~1]{de2010adaptive} or~\citep[Proposition~4.1]{lu2020balancing}. 

In the following, we propose a novel form of the balancing principle based on the norm $\norm{\cdot}_{\Hess_\lambda(f_\mathcal{H})}$ originating from the self-concordance-based upper bound in Eq.~\eqref{eq:marteau_bound_theorem38}.
Since the estimation of $\norm{\cdot}_{\Hess_\lambda(f_\mathcal{H})}$ requires some care, we postpone it to the next Subection~\ref{subsec:balancing_principle_empirical_risk}.

We start by defining a suitable discretization of increasing parameter candidates $(\lambda_i)_{i=1}^l\in\mathbb{R}$ such that $\lambda^\ast\in[\lambda_1,\lambda_l]$ is in the range of admissible values (ensured by multiplication of $\eta$ in Eq.~\eqref{eq:balance}).
The balancing principle is then based on checking a necessary condition.
Namely, whenever $\lambda_j\leq\lambda_i\leq\lambda^\ast$, Lemma~\ref{lemma:marteau_main_result} and the monotonicity of $$S(m+n,\delta,\lambda):=\frac{414 Q_0^2}{(m+n)\lambda^{1/\alpha}}\log\!\left(\frac{2}{\delta}\right) \quad \text{ and }  \quad A(\lambda):=414 L^2\lambda^{1+2 r},$$ grant us the inequality
\begin{align}
    \norm{f_\z^{\lambda_i}-f_\z^{\lambda_j}}_{\Hess_{\lambda_j}(f_\mathcal{H})}^2
    &\leq  2 \norm{f_\z^{\lambda_j}-f_{\mathcal{H}}}_{\Hess_{\lambda_j}(f_\mathcal{H})}^2 + 2 \norm{f_\z^{\lambda_i}-f_{\mathcal{H}}}_{\Hess_{\lambda_j}(f_\mathcal{H})}^2\nonumber\\
    &\leq  2 \norm{f_\z^{\lambda_j}-f_{\mathcal{H}}}_{\Hess_{\lambda_j}(f_\mathcal{H})}^2 + 2 \norm{f_\z^{\lambda_i}-f_{\mathcal{H}}}_{\Hess_{\lambda_i}(f_\mathcal{H})}^2\nonumber\\
    &\leq 2 S(m+n,\delta,\lambda_j) + 2 A(\lambda_j) + 2 S(m+n,\delta,\lambda_i) + 2 A(\lambda_i)\nonumber\\
    &\leq 4 S(m+n,\delta,\lambda_j) + 4 S(m+n,\delta,\lambda_i)\nonumber\\
    &\leq 8 S(m+n,\delta,\lambda_j)\nonumber\\
    \label{eq:necessary_balancing_condition}
    &\leq 8 \eta\cdot S(m+n,\delta,\lambda_j),
\end{align}
where we have used that $\Hess_{\lambda_j}(f)=\Hess(f)+\lambda_j I\leq \Hess(f)+\lambda_i I=\Hess_{\lambda_i}(f)$ and $\eta$ as defined in Proposition~\ref{prop:fast_rate}.
Following the above reasoning, if the error bound in Lemma~\ref{lemma:marteau_main_result} is sharp, then the maximum $\lambda_i\in (\lambda_i)_{i=1}^l$ satisfying the above condition for all smaller $\lambda_j, j\in \{1,\ldots,i-1\}$, can be expected to be close to $\lambda^\ast$.
This leads to the following estimate
\begin{align}
    \label{eq:balancing_principle_estimate_known_norm}
    \lambda_{+}:=\max\!\left\{\lambda_i: \norm{f_\z^{\lambda_i}-f_\z^{\lambda_j}}_{\Hess_{\lambda_j}(f_\mathcal{H})}^2\leq 8\eta\cdot S(m+n,\delta,\lambda_j), j\in\{1,\ldots,i-1\}\right\}.
\end{align}
Importantly, Eq.~\eqref{eq:necessary_balancing_condition} does not depend on the unknown regularity index $r\in (0,\frac{1}{2}]$ but only on the quantities $m,n,Q_0,\alpha$ which are either given or an upper bound can be easily estimated from data.
We may state the following preparatory result.
\begin{thm}
    \label{thm:balancing_principle_known_norm}
    Let Assumptions~\ref{ass:iid_of_classification_data}--\ref{ass:capacity_condition} be satisfied and let
    \begin{align*}
        \lambda_i:=\lambda_0\cdot \xi^{i}, i\in\{1,\ldots,l\}
    \end{align*}%
    with $\xi>1$, $\lambda_0\leq \xi^{-l} \min\left\{B_2^\ast,\left(2 L R \log\frac{2}{\delta}\right)^{-\frac{1}{r}}, Q_0^{2\alpha} (B_2^\ast)^\alpha (B_1^\ast)^{-2 \alpha}\right\}$, $l<\frac{e^{1296}}{4}-2$ and $\delta\in [\frac{2}{e^{1296}},\frac{1}{4+2l}]$.
    Then there exists a quantity $C>0$ not depending on $m,n,\delta$, such that the minimizer $f_{\z}^{\lambda_+}$ of Eq.~\eqref{eq:empirical_loss_objective} satisfies
    \begin{align}
        \label{eq:theorem_known_norm_rate_fast}
        B(f_{\z}^{\lambda_+})-B(f_\mathcal{H})\leq 
        C (m+n)^{-\frac{2 r\alpha+\alpha}{2 r \alpha+\alpha+1}}
    \end{align}
    with probability at least $1-(4+2 l)\delta$, for large enough sample size $m+n$ depending on $\delta$.
\end{thm}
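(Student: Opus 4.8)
The plan is to follow the classical Lepskii-principle argument, adapted to the non-quadratic setting via the surrogate norm $\norm{\cdot}_{\Hess_\lambda(f_\mathcal{H})}$. First I would fix a high-probability event on which the bound of Lemma~\ref{lemma:marteau_main_result} holds simultaneously for \emph{all} $\lambda_i$, $i\in\{1,\dots,l\}$. Since each instance of Lemma~\ref{lemma:marteau_main_result} fails with probability at most $2\delta$, a union bound over the $l$ candidates (plus the two extra failure events implicit in applying the lemma at the oracle value $\lambda^\ast$ and in the final comparison step) gives the claimed confidence $1-(4+2l)\delta$; the hypotheses $l<\frac{e^{1296}}{4}-2$ and $\delta\le\frac{1}{4+2l}$ are exactly what keeps $(4+2l)\delta\le 1$ and keeps each individual $\delta$ in the admissible range $[\frac{2}{e^{1296}},\frac12]$ required by Lemma~\ref{lemma:marteau_main_result}. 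On this event, for every $i$ we have $\norm{f_\z^{\lambda_i}-f_\mathcal{H}}_{\Hess_{\lambda_i}(f_\mathcal{H})}^2\le S(m+n,\delta,\lambda_i)+A(\lambda_i)$.

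Next I would run the standard two-sided comparison. Let $\lambda^\ast$ be the balancing value from Proposition~\ref{prop:fast_rate} (for the $S,A$ of Lemma~\ref{lemma:marteau_main_result}), and let $i^\ast$ be the largest index with $\lambda_{i^\ast}\le\lambda^\ast$; because of the multiplicative safety factor $\eta$ and the choice of $\lambda_0$, such an index exists and $\lambda^\ast\le\xi\lambda_{i^\ast}$. Step one: $\lambda_+\ge\lambda_{i^\ast}$. This is the computation already displayed in Eq.~\eqref{eq:necessary_balancing_condition}: for $\lambda_j\le\lambda_i\le\lambda^\ast$ the triangle inequality, the operator monotonicity $\Hess_{\lambda_j}\preceq\Hess_{\lambda_i}$, the per-$\lambda$ bounds, the balancing identity $\eta S(\cdot,\lambda^\ast)=A(\lambda^\ast)$ together with monotonicity of $S$ (decreasing) and $A$ (increasing), and finally $S(\cdot,\lambda_i)\le S(\cdot,\lambda_j)$, chain together to give $\norm{f_\z^{\lambda_i}-f_\z^{\lambda_j}}_{\Hess_{\lambda_j}(f_\mathcal{H})}^2\le 8\eta\,S(m+n,\delta,\lambda_j)$; hence every $\lambda_i\le\lambda_{i^\ast}$ passes the test defining $\lambda_+$, so $\lambda_+\ge\lambda_{i^\ast}$. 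Step two: bound the error at $\lambda_+$. Since $\lambda_+$ passed the test against $j=i^\ast$ (here I use $\lambda_{i^\ast}\le\lambda_+$, valid by step one, so $i^\ast$ is among the smaller indices compared against), we get $\norm{f_\z^{\lambda_+}-f_\z^{\lambda_{i^\ast}}}_{\Hess_{\lambda_{i^\ast}}(f_\mathcal{H})}^2\le 8\eta\,S(m+n,\delta,\lambda_{i^\ast})$; combining with $\norm{f_\z^{\lambda_{i^\ast}}-f_\mathcal{H}}_{\Hess_{\lambda_{i^\ast}}(f_\mathcal{H})}^2\le S(\cdot,\lambda_{i^\ast})+A(\lambda_{i^\ast})\le 2S(\cdot,\lambda_{i^\ast})$ (monotonicity plus $A(\lambda_{i^\ast})\le A(\lambda^\ast)=\eta S(\cdot,\lambda^\ast)\le\eta S(\cdot,\lambda_{i^\ast})$) and a triangle inequality in $\norm{\cdot}_{\Hess_{\lambda_{i^\ast}}(f_\mathcal{H})}$, using $\Hess_{\lambda_{i^\ast}}\preceq\Hess_{\lambda_+}$ is not needed here since we stay at scale $\lambda_{i^\ast}$, I obtain
\begin{align*}
B(f_\z^{\lambda_+})-B(f_\mathcal{H})\le\norm{f_\z^{\lambda_+}-f_\mathcal{H}}_{\Hess(f_\mathcal{H})}^2\le\norm{f_\z^{\lambda_+}-f_\mathcal{H}}_{\Hess_{\lambda_{i^\ast}}(f_\mathcal{H})}^2\le C'\,S(m+n,\delta,\lambda_{i^\ast})
\end{align*}
for an absolute constant $C'$ (here the first two inequalities need $\lambda_{i^\ast}\le\lambda_+$ so that the $\Hess$ in Lemma~\ref{lemma:marteau_main_result}'s left-hand chain is dominated — more carefully one invokes the left inequalities of Eq.~\eqref{eq:marteau_bound_theorem38} at $\lambda=\lambda_+$ and then passes to the smaller shift).

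Finally I would convert the bound $S(m+n,\delta,\lambda_{i^\ast})$ into the stated rate. Since $\lambda^\ast/\xi\le\lambda_{i^\ast}\le\lambda^\ast$ and $S(m+n,\delta,\cdot)$ scales like $\lambda^{-1/\alpha}$, we have $S(m+n,\delta,\lambda_{i^\ast})\le\xi^{1/\alpha}S(m+n,\delta,\lambda^\ast)$, and by the balancing identity $S(m+n,\delta,\lambda^\ast)\le S(m+n,\delta,\lambda^\ast)+A(\lambda^\ast)$ is exactly the oracle bound appearing in Proposition~\ref{prop:fast_rate}, which is $O((m+n)^{-\frac{2r\alpha+\alpha}{2r\alpha+\alpha+1}})$. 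Absorbing $\xi^{1/\alpha}$, $C'$, the $\log(2/\delta)$ factor (constant in $m,n$), and the constant from Proposition~\ref{prop:fast_rate} into a single $C$ not depending on $m,n,\delta$ yields Eq.~\eqref{eq:theorem_known_norm_rate_fast}. The sample-size requirement is the maximum of the requirements \eqref{eq:sample_size_condition_marteau_thm38} over $\lambda\in\{\lambda_1,\dots,\lambda_l\}$, which since $\lambda\mapsto$ (those thresholds) is monotone reduces to the requirement at $\lambda_1=\lambda_0\xi$, i.e.\ a finite explicit bound depending on $\delta$ through $\log(1/\delta)$.

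The main obstacle I anticipate is purely bookkeeping rather than conceptual: the surrogate norms $\norm{\cdot}_{\Hess_{\lambda_j}(f_\mathcal{H})}$ change with the index, so every triangle inequality has to be performed at the \emph{smaller} shift and one must repeatedly invoke $\Hess_{\lambda_j}(f_\mathcal{H})\preceq\Hess_{\lambda_i}(f_\mathcal{H})$ for $\lambda_j\le\lambda_i$ to move between scales in the correct direction — getting the direction of these operator inequalities right at each step (and checking that the left-hand chain $\mathcal{R}(f_\z^\lambda)-\mathcal{R}(f_\mathcal{H})\le\norm{\cdot}_{\Hess(f_\mathcal{H})}^2\le\norm{\cdot}_{\Hess_\lambda(f_\mathcal{H})}^2$ of Lemma~\ref{lemma:marteau_main_result} is applied at the right $\lambda$) is where an error would most easily creep in. A secondary point requiring care is ensuring the union bound is taken over the correct family of events and that all the $\delta$-range and $l$-range constraints ($\delta\in[\frac{2}{e^{1296}},\frac{1}{4+2l}]$, $l<\frac{e^{1296}}{4}-2$) are simultaneously compatible with every application of Lemma~\ref{lemma:marteau_main_result} and of Proposition~\ref{prop:fast_rate}; these bizarre-looking constants are precisely the slack needed for that, and I would verify them explicitly at the end.
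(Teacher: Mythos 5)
Your proposal is correct and follows essentially the same route as the paper: the paper's comparison point $\overline{\lambda}:=\max\{\lambda_i: A(\lambda_i)\leq\eta S(m+n,\delta,\lambda_i)\}$ coincides with your $\lambda_{i^\ast}$ (the largest grid point below $\lambda^\ast$), and the two-sided Lepskii argument, the operator-monotonicity bookkeeping at the smaller shift, the union-bound accounting to $1-(4+2l)\delta$, and the final $\xi^{1/\alpha}$ conversion all match the paper's proof. The only blemish is the intermediate "$S+A\leq 2S$", which should read $\leq(1+\eta)S\leq 2\eta S$ as your own parenthetical derivation shows; this is absorbed into $C$ and does not affect the result.
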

Bounds on the sample size $m+n$ and the quantity $C$ are given in Eq.~\eqref{eq:sample_size_known_norm} and Eq.~\eqref{eq:const_in_known_risk_balancing_bound}, respectively.
We note that Theorem~\ref{thm:balancing_principle_known_norm} achieves the same error rate as Proposition~\ref{prop:fast_rate} but uses the parameter choice $\lambda_+$ which is a posteriori computable if we have access to the norm values $\norm{f_\z^{\lambda_i}-f_\z^{\lambda_j}}_{\Hess_{\lambda_j}(f_\mathcal{H})}^2$.
However, these values are not accessible, as we don't have access to the measure $\rho$.
This issue is solved in the next Subsection~\ref{subsec:balancing_principle_empirical_risk}.

\subsection{Parameter Choice with Estimated Norm}
\label{subsec:balancing_principle_empirical_risk}

The following concentration result is proven in Subsection~\ref{subsec:proofs_balancing_principle_empirical} and essentially combines analytical arguments for self-concordance losses~\citep{marteau2019beyond} with concentration inequalities for Hermitian operators~\citep{rudi2017generalization}.%
\begin{lemma}
    \label{lemma:Hessian_concentration}
    Let the assumptions of Lemma~\ref{lemma:marteau_main_result} be satisfied, let $\delta\in(0,\frac{1}{2}]$, denote the empirical Hessian by $\widehat{\Hess}(f):=\frac{1}{m+n}\sum_{z\in \z} \nabla^2 \ell_z(f)$, and let us write $\mathbf{B}\preccurlyeq\mathbf{A}$ iff $\mathbf{A}-\mathbf{B}$ is positive semi-definite.
    Whenever $\norm{\Hess(f_\z^\lambda)}>0$ and
    \begin{align}
        \label{eq:additional_sample_size_condition}
        m+n\geq \frac{16 B_2^\ast}{\norm{\Hess(f_\z^\lambda)}}\log\!\left(\frac{2}{\delta}\right),
    \end{align}
    then, with probability at least $1-4\delta$,
    \begin{align}
        \label{eq:Hessian_concentration}
        \widehat{\Hess}_\lambda(f_\z^\lambda)
        \preccurlyeq 6 \Hess_\lambda(f_\mathcal{H})
        \preccurlyeq 48 \widehat{\Hess}_\lambda(f_\z^\lambda).
    \end{align}
\end{lemma}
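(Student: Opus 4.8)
The plan is to reduce the two-sided operator inequality to a single scalar-valued concentration estimate for the self-concordant Hessian, following the strategy of \citet{marteau2019beyond} but combined with the matrix Bernstein inequality of \citet{rudi2017generalization}. First I would introduce the preconditioned empirical Hessian $\Hess_\lambda(f_\mathcal{H})^{-1/2}\widehat{\Hess}(f_\z^\lambda)\Hess_\lambda(f_\mathcal{H})^{-1/2}$ and aim to show that, with high probability, it lies between constant multiples of the identity; multiplying back by $\Hess_\lambda(f_\mathcal{H})^{1/2}$ on both sides then yields Eq.~\eqref{eq:Hessian_concentration}. The argument naturally splits into two pieces: (i) relating $\widehat{\Hess}(f_\z^\lambda)$ to the population Hessian $\Hess(f_\z^\lambda)$ at the \emph{same} point $f_\z^\lambda$, which is a direct application of a Bernstein-type concentration inequality for the i.i.d.\ sum $\widehat{\Hess}(f_\z^\lambda)=\frac{1}{m+n}\sum_{z\in\z}\nabla^2\ell_z(f_\z^\lambda)$ after preconditioning; and (ii) relating $\Hess(f_\z^\lambda)$ to $\Hess(f_\mathcal{H})$, which is exactly where generalized self-concordance (Assumption~\ref{ass:generalized_self_concordance}) enters, since it controls how fast the Hessian $\nabla^2\ell_z(f)$ varies in $f$ in terms of $\norm{f-f_\mathcal{H}}$.

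For step (ii) I would invoke the self-concordance inequality Eq.~\eqref{eq:self_concordant} in its integrated form: $\nabla^2\ell_z(f_\z^\lambda)\preccurlyeq e^{\,\sup_{p\in\varphi(z)}|{(f_\z^\lambda-f_\mathcal{H})}\cdot p|}\,\nabla^2\ell_z(f_\mathcal{H})$ and conversely (this is the standard consequence of self-concordance, used throughout \citep{marteau2019beyond}). Combined with the bound $\norm{f_\z^\lambda-f_\mathcal{H}}_{\Hess_\lambda(f_\mathcal{H})}^2\le 2S(m+n,\delta,\lambda)+2A(\lambda)$ from Lemma~\ref{lemma:marteau_main_result}, together with $\sup_{p\in\varphi(z)}\norm{p}\le R$ from Assumption~\ref{ass:boundedness_defin0_existence}, one controls the exponential factor by an absolute constant once the sample size conditions of Lemma~\ref{lemma:marteau_main_result} hold; this is why the hypotheses of that lemma are assumed. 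Taking expectations preserves the operator inequality, giving $\Hess(f_\z^\lambda)$ sandwiched between constant multiples of $\Hess(f_\mathcal{H})$, hence the same for the $\lambda$-shifted versions since adding $\lambda I$ respects $\preccurlyeq$.

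For step (i) I would apply the operator Bernstein inequality to the centered sum $\frac{1}{m+n}\sum_z \Hess_\lambda(f_\mathcal{H})^{-1/2}(\nabla^2\ell_z(f_\z^\lambda)-\Hess(f_\z^\lambda))\Hess_\lambda(f_\mathcal{H})^{-1/2}$, whose summands are bounded in operator norm by $B_2^\ast/\lambda$ (using the definition of $B_2^\ast$ and $\Hess_\lambda(f_\mathcal{H})\succcurlyeq\lambda I$) and have a variance proxy controlled similarly; the extra sample-size requirement Eq.~\eqref{eq:additional_sample_size_condition} is precisely what makes the deviation smaller than a fixed fraction, so that $\widehat{\Hess}_\lambda(f_\z^\lambda)$ and $\Hess_\lambda(f_\z^\lambda)$ differ by at most a constant factor with probability $1-2\delta$. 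Chaining the two steps and absorbing a further $1-2\delta$ event from Lemma~\ref{lemma:marteau_main_result} underlying step (ii) gives the claimed probability $1-4\delta$, and tracking the constants yields the numbers $6$ and $48$. The main obstacle I anticipate is getting the constants to close: one must carefully coordinate the constant bound on the self-concordance exponential (step ii), the Bernstein deviation threshold (step i), and the constants already baked into Lemma~\ref{lemma:marteau_main_result}, so that their product lands inside $[6,48]$; a subtlety is that $f_\z^\lambda$ is random, so the concentration in step (i) must hold on an event where the Lemma~\ref{lemma:marteau_main_result} bound on $\norm{f_\z^\lambda-f_\mathcal{H}}_{\Hess_\lambda(f_\mathcal{H})}$ is already available, which forces the events to be intersected in the right order rather than applied independently.
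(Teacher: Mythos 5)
Your proposal matches the paper's proof in all essentials: the paper likewise combines the operator concentration bounds of Lemma~\ref{lemma:technical_concentration_proof} (imported from \citet{marteau2019beyond} and \citet{rudi2017generalization}, with Eq.~\eqref{eq:additional_sample_size_condition} enabling the direction $\widehat{\Hess}_\lambda\preccurlyeq\frac{3}{2}\Hess_\lambda$) with the self-concordance comparison $\Hess_\lambda(f_1)\preccurlyeq e^{\ttt(f_1-f_2)}\Hess_\lambda(f_2)$ of Lemma~\ref{lemma:marteau_prop15}, controls the exponent by $\ttt(f_\z^\lambda-f_\mathcal{H})\leq 2\log 2$ via Lemma~\ref{lemma:t_concentration}, and takes a union bound over four events to reach $1-4\delta$. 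The only cosmetic difference is the order of the two steps in the second inequality: the paper concentrates at the deterministic point $f_\mathcal{H}$ ($\Hess_\lambda(f_\mathcal{H})\preccurlyeq 2\widehat{\Hess}_\lambda(f_\mathcal{H})$) and then applies self-concordance to the \emph{empirical} Hessian, which sidesteps the data-dependence of $f_\z^\lambda$ you worried about in step (i) and yields the constants $6=\tfrac{3}{2}\cdot 4$ and $48=6\cdot 2\cdot 4$.
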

Let us denote by $(\lambda_i)_{i=1}^l\in\mathbb{R}$ a geometric sequence with $\lambda_i:=\lambda_0\cdot \xi^{i}, i\in\{1,\ldots,l\}$ as defined in Theorem~\ref{thm:balancing_principle_known_norm}.
Then, Lemma~\ref{lemma:Hessian_concentration} motivates the following assumption
\begin{assumption}
    \label{ass:Hessian_of_sequence}
    For the sequence $(\lambda_i)_{i=1}^l$
    there exists $b^\ast>0$ with $b^\ast\leq \min_{i\in\{1,\ldots,l\}}\norm{\Hess(f_\z^{\lambda_i})}$.
\end{assumption}
Note that for strictly convex 
loss functions, the Hessian is positive definite, and thus Assumption~\ref{ass:Hessian_of_sequence} is necessarily satisfied.

The same reasoning as done for Eq.~\eqref{eq:necessary_balancing_condition} motivates, under Assumption~\ref{ass:Hessian_of_sequence},
the estimate
\begin{align}
    \label{eq:balancing_principle_estimate}
    \lambda_{\mathrm{BP}}:=\max\!\left\{\lambda_i: \norm{f_\z^{\lambda_i}-f_\z^{\lambda_j}}_{\widehat{\Hess}_{\lambda_j}(f_\z^{\lambda_j})}^2\leq 48\eta\cdot S(m+n,\delta,\lambda_j), j\in\{1,\ldots,i-1\}\right\}
\end{align}
with $\eta:=1296\log^{-1}(\frac{2}{\delta})$ as defined in Proposition~\ref{prop:fast_rate}.
Note that Eq.~\eqref{eq:balancing_principle_estimate} uses only given data or information for which upper bounds are easily estimable.
Moreover, the following error rates result holds.
\begin{thm}
    \label{thm:balancing_principle_empirical}
    Let all assumptions of Theorem~\ref{thm:balancing_principle_known_norm} be satisfied for some $l<\frac{e^{1296}}{12}-1$ and $\delta\in [\frac{2}{e^{1296}},\frac{1}{6+6l}]$.
    Let further $(\lambda_i)_{i=1}^l$ satisfy Assumption~\ref{ass:Hessian_of_sequence}.
    Then there exists a quantity $C>0$ not depending on $m,n,\delta$, such that the minimizer $f_{\z}^{\lambda_{\mathrm{BP}}}$ of Eq.~\eqref{eq:empirical_loss_objective} satisfies
    \begin{align}
        \label{eq:theorem_empirical_norm_rate_fast}
        B(f_{\z}^{\lambda_{\mathrm{BP}}})-B(f_\mathcal{H})\leq 
        C (m+n)^{-\frac{2 r\alpha+\alpha}{2 r \alpha+\alpha+1}}
    \end{align}
    with probability at least $1-(6+6 l)\delta$, for large enough sample size $m+n$ depending on $\delta$.
\end{thm}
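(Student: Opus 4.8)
The plan is to reduce Theorem~\ref{thm:balancing_principle_empirical} to Theorem~\ref{thm:balancing_principle_known_norm} by controlling the difference between the empirical balancing rule Eq.~\eqref{eq:balancing_principle_estimate} and the population balancing rule Eq.~\eqref{eq:balancing_principle_estimate_known_norm}. The key observation is that the two selection criteria differ only in which quadratic form is used to measure $\norm{f_\z^{\lambda_i}-f_\z^{\lambda_j}}^2$: the empirical one uses $\widehat{\Hess}_{\lambda_j}(f_\z^{\lambda_j})$ while the population one uses $\Hess_{\lambda_j}(f_\mathcal{H})$, and the thresholds differ by the constant factor $48$ versus $8$. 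Lemma~\ref{lemma:Hessian_concentration} gives exactly the two-sided operator sandwich $\widehat{\Hess}_{\lambda_j}(f_\z^{\lambda_j}) \preccurlyeq 6\,\Hess_{\lambda_j}(f_\mathcal{H}) \preccurlyeq 48\,\widehat{\Hess}_{\lambda_j}(f_\z^{\lambda_j})$, valid with probability $1-4\delta$ on each index $j$ provided the extra sample-size condition Eq.~\eqref{eq:additional_sample_size_condition} holds; Assumption~\ref{ass:Hessian_of_sequence} supplies the uniform lower bound $b^\ast$ on $\norm{\Hess(f_\z^{\lambda_i})}$ that makes Eq.~\eqref{eq:additional_sample_size_condition} a clean requirement $m+n \geq \frac{16 B_2^\ast}{b^\ast}\log(2/\delta)$ on the total sample size, independent of $i$.

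\textbf{Step 1: Union bound over the grid.} I would first fix the event on which, simultaneously for all pairs $(i,j)$ with $j<i$ in $\{1,\dots,l\}$, both the conclusion of Lemma~\ref{lemma:marteau_main_result} (the bias/variance bound Eq.~\eqref{eq:marteau_bound_theorem38} at each $\lambda_i$) and the operator sandwich Eq.~\eqref{eq:Hessian_concentration} at each $\lambda_j$ hold. Counting failure probabilities: Lemma~\ref{lemma:marteau_main_result} costs $2\delta$ per grid point and Lemma~\ref{lemma:Hessian_concentration} costs $4\delta$ per grid point, so a crude union bound over $l$ points costs at most $6l\delta$, and together with the $(4+2l)\delta$ already consumed inside the proof of Theorem~\ref{thm:balancing_principle_known_norm}, bookkeeping is arranged so the final failure probability is $(6+6l)\delta$; this is why the hypotheses tighten to $\delta\in[2/e^{1296},\tfrac{1}{6+6l}]$ and $l<\tfrac{e^{1296}}{12}-1$. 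On this good event everything below is deterministic.

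\textbf{Step 2: Transfer between the two selection rules.} On the good event, the operator sandwich yields, for every admissible pair, $\norm{f_\z^{\lambda_i}-f_\z^{\lambda_j}}_{\Hess_{\lambda_j}(f_\mathcal{H})}^2 \le 6\,\norm{f_\z^{\lambda_i}-f_\z^{\lambda_j}}_{\widehat{\Hess}_{\lambda_j}(f_\z^{\lambda_j})}^2$ and $\norm{f_\z^{\lambda_i}-f_\z^{\lambda_j}}_{\widehat{\Hess}_{\lambda_j}(f_\z^{\lambda_j})}^2 \le 8\,\norm{f_\z^{\lambda_i}-f_\z^{\lambda_j}}_{\Hess_{\lambda_j}(f_\mathcal{H})}^2$ (the $48=6\cdot 8$ factorization is exactly what makes the constants close). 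Consequently: if $\lambda_i$ passes the \emph{population} test at threshold $8\eta S(m+n,\delta,\lambda_j)$ for all $j<i$, then it passes the \emph{empirical} test at threshold $48\eta S$ (since the empirical norm is at most $6$ times the population norm, and $6\cdot 8 = 48$); hence $\lambda_+ \le \lambda_{\mathrm{BP}}$. Conversely one checks that $\lambda_{\mathrm{BP}}$ cannot overshoot: if $\lambda_{\mathrm{BP}}>\lambda^\ast$ then for the largest grid point $\lambda_{j}\le\lambda^\ast$ the empirical test at $\lambda_{\mathrm{BP}}$ versus $\lambda_j$ would be passed, but combining Eq.~\eqref{eq:Hessian_concentration} with the bias/variance bound Eq.~\eqref{eq:marteau_bound_theorem38} at both $\lambda_j$ and $\lambda_{\mathrm{BP}}$ and the monotonicity of $S$ and $A$ (exactly the chain of inequalities Eq.~\eqref{eq:necessary_balancing_condition}, now carrying an extra factor absorbed into the $48$) forces $\lambda_{\mathrm{BP}}$ to lie within one grid step $\xi$ of $\lambda^\ast$.

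\textbf{Step 3: Conclude via the oracle bound.} Once $\lambda_{\mathrm{BP}}$ is pinned to the interval $[\lambda_+, \xi\lambda^\ast]$ (or more simply to within a fixed multiplicative constant of $\lambda^\ast$), I plug $\lambda=\lambda_{\mathrm{BP}}$ into Eq.~\eqref{eq:marteau_bound_theorem38}: since $S(m+n,\delta,\lambda)$ is decreasing and $A(\lambda)$ increasing, evaluating at any $\lambda$ comparable to $\lambda^\ast$ costs only a constant factor in the $S+A$ sum relative to the balanced value, and at $\lambda^\ast$ that sum is of order $(m+n)^{-(2r\alpha+\alpha)/(2r\alpha+\alpha+1)}$ by the same computation that proves Proposition~\ref{prop:fast_rate} and Theorem~\ref{thm:balancing_principle_known_norm}. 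Adjusting $C$ to absorb the factors $6$, $8$, $48$, $\xi$ and the $\log(2/\delta)$ terms (which are bounded because $\delta$ is bounded away from $0$) gives Eq.~\eqref{eq:theorem_empirical_norm_rate_fast}. The explicit sample-size requirement combines the three quantities appearing in Eq.~\eqref{eq:sample_size_known_norm}, the extra condition Eq.~\eqref{eq:additional_sample_size_condition} with $\norm{\Hess(f_\z^\lambda)}$ replaced by $b^\ast$, and whatever is needed to guarantee $\lambda^\ast\in[\lambda_1,\lambda_l]$.

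\textbf{Main obstacle.} The delicate point is not the oracle inequality but making the reduction in Step~2 airtight: Lemma~\ref{lemma:Hessian_concentration} compares $\widehat{\Hess}_\lambda(f_\z^\lambda)$ to $\Hess_\lambda(f_\mathcal{H})$ — that is, the \emph{empirical} Hessian at the \emph{data-dependent} estimator $f_\z^{\lambda_j}$ against the \emph{population} Hessian at the \emph{fixed} target $f_\mathcal{H}$ — and this is precisely the mismatch needed to relate the empirical test statistic to the population one. One must verify that the event of Lemma~\ref{lemma:Hessian_concentration} at index $j$ and the event of Lemma~\ref{lemma:marteau_main_result} at indices $j$ and $i$ can be intersected cleanly (they are, since all are high-probability statements under the same Assumptions~\ref{ass:iid_of_classification_data}--\ref{ass:capacity_condition}), and that the sample-size conditions are \emph{uniform} in $i,j$ — this is exactly where Assumption~\ref{ass:Hessian_of_sequence} enters to replace the $i$-dependent quantity $\norm{\Hess(f_\z^{\lambda_i})}$ in Eq.~\eqref{eq:additional_sample_size_condition} by the single constant $b^\ast$. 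Beyond that, the argument is the standard Lepskii balancing analysis already carried out for Theorem~\ref{thm:balancing_principle_known_norm}, now wrapped in one extra layer of operator-norm concentration.
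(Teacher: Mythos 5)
Your overall architecture — union bound over the grid, the operator sandwich of Lemma~\ref{lemma:Hessian_concentration} to transfer between the empirical and population test statistics, and Assumption~\ref{ass:Hessian_of_sequence} to make the sample-size condition Eq.~\eqref{eq:additional_sample_size_condition} uniform in $i$ — matches the paper's proof. The first half of your Step~2 (population test passed $\Rightarrow$ empirical test passed, hence $\overline{\lambda}\leq\lambda_+\leq\lambda_{\mathrm{BP}}$) is also the right and needed fact, although note that your displayed inequalities have the constants $6$ and $8$ attached to the wrong norms; the parenthetical remark in words uses the correct direction.

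The genuine gap is the ``converse'' in Step~2 and the way Step~3 concludes. The chain Eq.~\eqref{eq:necessary_balancing_condition} is a one-way implication: it shows that every $\lambda_i$ with $A(\lambda_i)\leq\eta S(m+n,\delta,\lambda_i)$ \emph{passes} the test. It does not show that passing the test forces $\lambda_i\leq\xi\lambda^\ast$, and in general $\lambda_{\mathrm{BP}}$ can sit far above $\lambda^\ast$ on the grid (e.g.\ if the estimators $f_\z^{\lambda_i}$ happen to be mutually close for all large $i$). Consequently you cannot ``pin $\lambda_{\mathrm{BP}}$ to $[\lambda_+,\xi\lambda^\ast]$'' and then plug $\lambda=\lambda_{\mathrm{BP}}$ into Eq.~\eqref{eq:marteau_bound_theorem38}: the bias term $A(\lambda_{\mathrm{BP}})=414L^2\lambda_{\mathrm{BP}}^{1+2r}$ is not controlled when $\lambda_{\mathrm{BP}}$ overshoots. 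The correct (and the paper's) conclusion never localizes $\lambda_{\mathrm{BP}}$ at all; it localizes the \emph{estimator}. From $\overline{\lambda}\leq\lambda_{\mathrm{BP}}$ and the defining property Eq.~\eqref{eq:balancing_principle_estimate}, the test at $\lambda_{\mathrm{BP}}$ versus $\overline{\lambda}$ was passed, so $\norm{f_\z^{\lambda_{\mathrm{BP}}}-f_\z^{\overline{\lambda}}}_{\widehat{\Hess}_{\overline{\lambda}}(f_\z^{\overline{\lambda}})}^2\leq 48\eta\, S(m+n,\delta,\overline{\lambda})$ no matter how large $\lambda_{\mathrm{BP}}$ is; transferring back with the factor $8$ from Eq.~\eqref{eq:Hessian_concentration} and combining via the triangle inequality with the oracle bound $\norm{f_\z^{\overline{\lambda}}-f_\mathcal{H}}_{\Hess_{\overline{\lambda}}(f_\mathcal{H})}^2\leq 2\eta\, S(m+n,\delta,\overline{\lambda})$ gives the rate, since $S(m+n,\delta,\overline{\lambda})\leq\xi^{1/\alpha}S(m+n,\delta,\lambda^\ast)$. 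You already have every ingredient for this; the repair is to replace your Steps~2(converse)--3 by this triangle-inequality decomposition through $f_\z^{\overline{\lambda}}$.
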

Exact bounds on the sample size $m+n$ and the quantity $C$ are given in Eq.~\eqref{eq:sample_size_empirical_norm} and Eq.~\eqref{eq:const_in_empirical_risk_balancing_bound}, respectively.
Theorem~\ref{thm:balancing_principle_empirical} is the main result of this work.
For large enough sample sizes, it provides an a posteriori parameter choice $\lambda_\mathrm{BP}$ that allows the minimizer $f_\z^{\lambda_\mathrm{BP}}$ of Eq.~\eqref{eq:empirical_loss_objective} to achieve an error rate that improves with the regularity of the true solution.
\begin{remark}
    To the best of our knowledge, Theorem~\ref{thm:balancing_principle_empirical} gives the first proof of the adaptivity to the unknown target regularity, for a parameter choice strategy in density ratio estimation.
    Even if similar adaptivity results hold for cross-validation in supervised learning~\citep{caponnetto2010cross} and the aggregation method in covariate shift domain adaptation~\citep{gizewski2022regularization}, their extension to density ratio estimation is not straight forward and an open future problem.
    In the special case of square loss (SQ), the order of the error rate is optimal (see Remark~\ref{remark:fast_rate}),  and Eq.~\eqref{eq:balancing_principle_estimate} thus provides, in this sense, adaptivity at optimal rate.
\end{remark}

\section{Numerical Example}
\label{sec:numerical_examples}

\begin{figure}[ht]
\includegraphics[width=.3\linewidth]{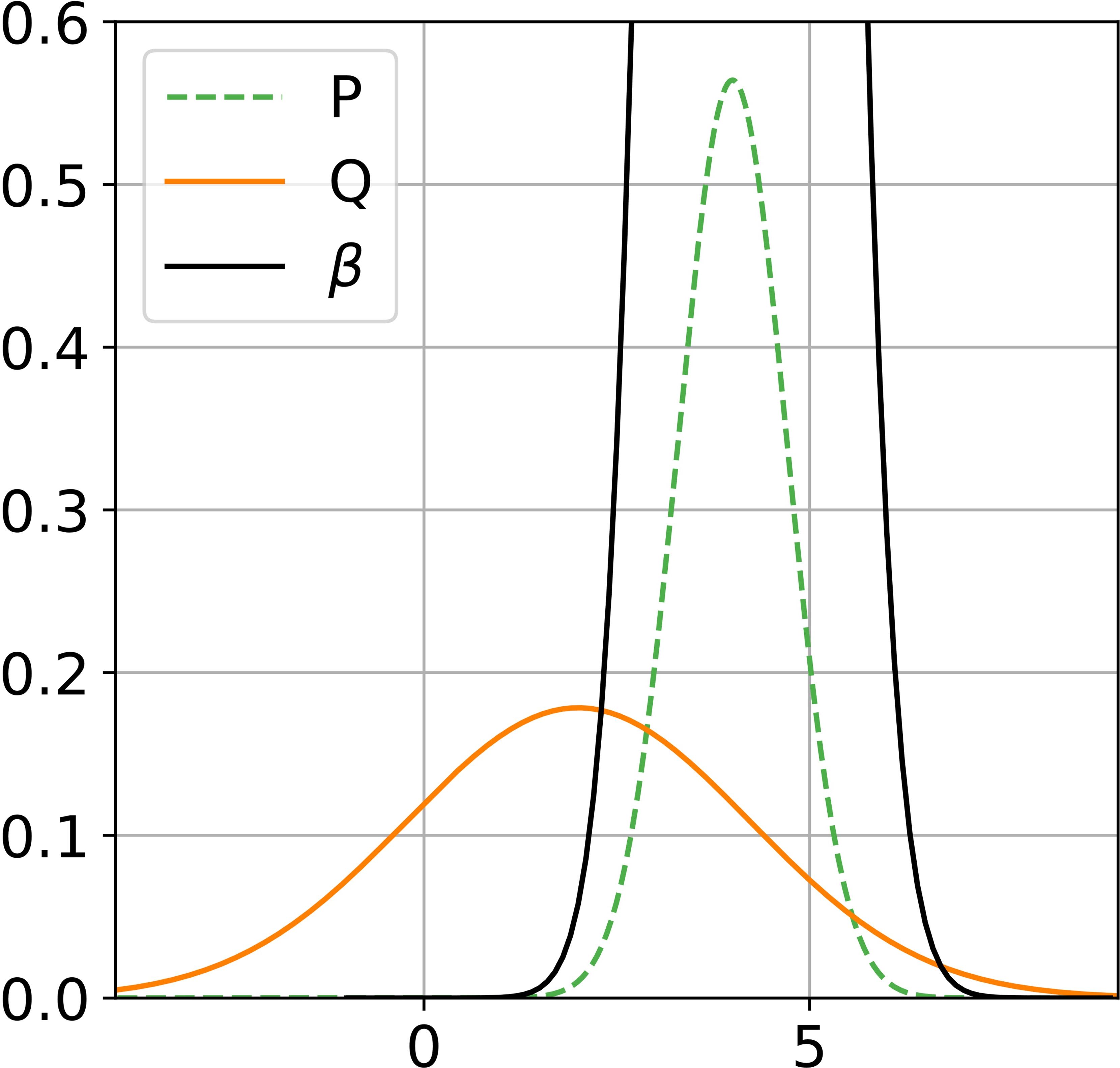}
\centering
\caption{The goal is to learn the density ratio $\beta=\frac{\diff P}{\diff Q}$ (black, solid).}
\label{fig:gaussians}
\end{figure}

To illustrate the behaviour of our approach, we rely on an example of~\citep{shimodaira2000improving,sugiyama2007covariate,dinu2022aggregation,nguyen2023regularized} with data from two Gaussian distributions $P,Q$ having means $4,2$ and standard deviations $1/\sqrt{2},\sqrt{5}$, respectively, see Figure~\ref{fig:gaussians}.

We use two different density ratio estimation methods in Eq.~\eqref{eq:empirical_loss_objective}, the Exp approach, and, the KuLSIF approach as described in Example~\ref{ex:example_in_introduction}.
In particular, we approximate a minimizer of Eq.~\eqref{eq:empirical_loss_objective} by the conjugate gradient method applied to the parameters $\alpha_1,\ldots,\alpha_{m+n}$ of Eq.~\eqref{eq:empirical_loss_objective} when evaluated at a model ansatz $f_\z^{\lambda}=\sum_{j=1}^{m+n} \alpha_j k(x_j,\cdot)$ with reproducing kernel $k(x,x'):=1+e^{-\frac{(x-x')^2}{2}}$ of $\mathcal{H}$, as suggested by the representer theorem~\citep{scholkopf2001generalized}.

Following our goal of choosing $\lambda$, we fix a geometric sequence $(\lambda_i)_{i=1}^l\in\mathbb{R}$ with $\lambda_i:=10^{i}, i\in\{-3,\ldots,1\}$ and compute
\begin{align}
    \label{eq:numerical_example_bp}
    \lambda^\circ:=\max\!\left\{\lambda_i: \norm{f_\z^{\lambda_i}-f_\z^{\lambda_j}}^2_{\widehat{\Hess}_{\lambda_j}(f_\z^{\lambda_j})}\leq \frac{M}{ \lambda_j (m+n)}, j\in\{1,\ldots,i-1\}\right\}
\end{align}
for some $M\in\mathbb{R}$ and the pessimistic choice $\alpha=1$ in Assumption~\ref{ass:capacity_condition}.
The empirical norm in Eq.~\eqref{eq:numerical_example_bp} can be computed, for two minimizers $f_\z^{\lambda_s}=\sum_{i=1}^{m+n} \alpha_i k(x_i,\cdot)$ and $f_\z^{\lambda_t}=\sum_{i=1}^{m+n} \beta_i k(x_i,\cdot)$ of Eq.~\eqref{eq:empirical_loss_objective}, by
\begin{align*}
    \norm{f_\z^{\lambda_s}-f_\z^{\lambda_t}}^2_{\widehat{\Hess}_{\lambda_t}(f_\z^{\lambda_t})} =
    \frac{1}{m+n}(\boldsymbol{\alpha}-\boldsymbol{\beta})^\text{T} {\bf K} {\bf E} {\bf K} (\boldsymbol{\alpha}-\boldsymbol{\beta})+ \lambda_t (\boldsymbol{\alpha}-\boldsymbol{\beta})^\text{T}{\bf K} (\boldsymbol{\alpha}-\boldsymbol{\beta}),
\end{align*}
\sloppy
where $\boldsymbol{\alpha}:=(\alpha_1,\ldots,\alpha_{m+n})$, $\boldsymbol{\beta}:=(\beta_1,\ldots,\beta_{m+n})$, ${\bf K}:=\left(k(x_i,x_j)\right)_{i,j=1}^{m+n}$ and ${\bf E}:=\mathrm{diag}(e_1,\ldots,e_{m+n})$ is a diagonal matrix with elements $e_{i}:=e^{-y_i\cdot \sum_{j=1}^{m+n} \beta_j k(x_i,x_j)}$, in case of the Exp method, and $e_{i}:=\frac{1-y_i}{2}$ in the case of KuLSIF, see Subsection~\ref{subsec:derivation_of_norm}.

To obtain theoretical error rate guarantees, Theorem~\ref{thm:balancing_principle_empirical} suggest to choose $M:=C$ as given by Eq.~\eqref{eq:const_in_empirical_risk_balancing_bound}.
However, as already noted in~\citep[Remark~6.1]{lu2020balancing}, this choice is too rough for the low number of samples considered in practical applications.
We therefore follow~\citep[Remark~6.1]{lu2020balancing} and choose $M=M_j:=\mathrm{Tr}(\widehat{\Hess}_{\lambda_j}(f_\z^{\lambda_j}))^{-2}$.

\begin{figure}[ht]
\includegraphics[width=\linewidth]{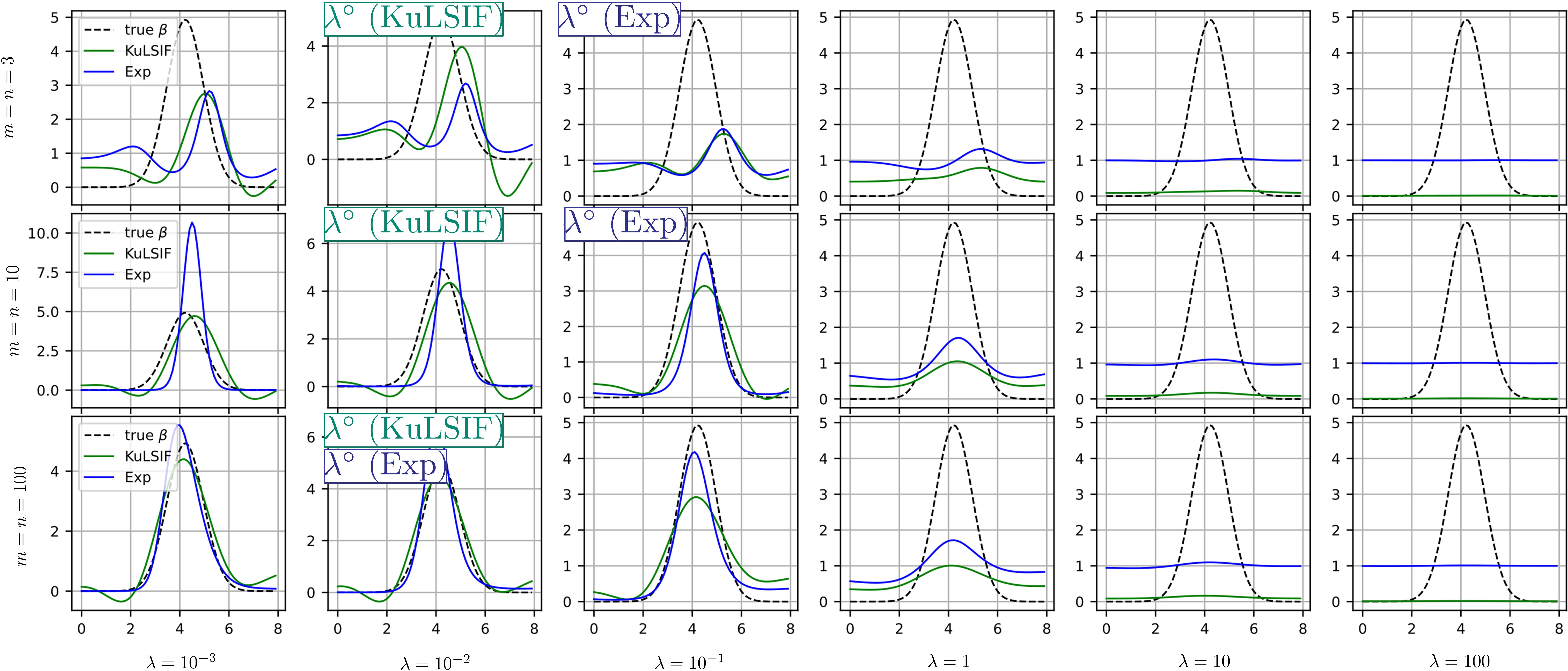}
\centering
\caption{Results for approximating the density ratio $\beta$ (black, dashed) by the KuLSIF (green, solid) and Exp method (blue, solid). Rows: sample sizes $m=n\in\{3,10,100\}$. Columns: Parameter choices $\lambda=10^i, i\in\{-3,\ldots,2\}$.
Choices of the proposed Lepskii type principle are marked by boxes in the upper left.}
\label{fig:lepskii}
\end{figure}

The results of our experiment can be found in Figure~\ref{fig:lepskii}.
The choice $\lambda^\circ$ in Eq.~\eqref{eq:numerical_example_bp} corresponds to $\lambda=10^{-2}$ for KuLSIF in all three cases of dataset size $m=n\in\{3,10,100\}$.
In case of the Exp method, it chooses $\lambda=10^{-1}$ for $m=n\in\{3,10\}$ and $\lambda=10^{-2}$ for $m=n=100$.
Although the choices are only optimal in two cases, w.r.t.~the mean squared error evaluated at a dense grid of values, the parameter choice principle in Eq.~\eqref{eq:numerical_example_bp} chooses always one of the two best values in the sequence.

\section{Verification of Details}
\label{sec:proofs}


\subsection{A priori parameter choice}
\label{subsec:quality_of_balancing_value}

The value $\lambda^\ast$ satisfying Eq.~\eqref{eq:balance} minimizes Eq.~\eqref{eq:bias_variance_simple} up to a constant factor (cf.~\citet{lu2013regularization}, Section~4.3).
To see this, let us denote a minimizer by $\lambda_1:=\argmin_{\lambda\geq 0}\eta S(m+n,\delta,\lambda)+A(\lambda)$. Then, either $\lambda^\ast<\lambda_1$, in which case
\begin{align*}
    \eta S(m+n,\delta,\lambda^\ast)=A(\lambda^\ast)\leq A(\lambda_1)\leq \eta \inf_{\lambda\geq 0}\left[ S(m+n,\delta,\lambda)+A(\lambda)\right],
\end{align*}
or $\lambda_1\leq \lambda^\ast$, and
\begin{align*}
    A(\lambda^\ast) = \eta S(m+n,\delta,\lambda^\ast)\leq \eta S(m+n,\delta,\lambda_1)\leq \eta \inf_{\lambda\geq 0} \left[ S(m+n,\delta,\lambda)+A(\lambda)\right],
\end{align*}
which gives, for large enough sample sizes and with high probability,
\begin{align}
\label{eq:optimality_of_exact_balancing_value}
    B(f_{\z}^{\lambda^\ast})-B(f_\mathcal{H})
    \leq \eta S(m+n,\delta,\lambda^\ast) + A(\lambda^\ast)
    \leq 2 \eta \inf_{\lambda\geq 0}  \left[ S(m+n,\delta,\lambda)+A(\lambda)\right].
\end{align}
We will need the following technical lemma.
\begin{lemma}
    \label{lemma:technical}
    If $A,B> 0$ and $k\geq 2 A \log(2 A B)$, then $A\log(B k)\leq k$.
\end{lemma}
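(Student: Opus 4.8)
The plan is to prove the elementary inequality $A\log(Bk)\le k$ under the hypotheses $A,B>0$ and $k\ge 2A\log(2AB)$ directly, by writing $\log(Bk)=\log(2AB)+\log\!\big(\tfrac{k}{2A}\big)$ and controlling each piece separately using that $k\ge 2A\log(2AB)$ (which in particular forces $\log(2AB)\ge 0$, since $k>0$ and $A>0$). First I would handle the case $\log(2AB)<0$ (equivalently $2AB<1$) as a degenerate possibility: then the hypothesis $k\ge 2A\log(2AB)$ is vacuous in the sense of being implied by $k>0$, but actually the statement $A\log(Bk)\le k$ need not hold for arbitrarily small $k$, so I expect the intended reading is $2AB\ge 1$; I would note that the lemma is only invoked in the paper with $2AB\ge 1$ and proceed under that understanding, or alternatively observe that if $2AB<1$ one still gets the bound once $k$ is moreover at least some absolute constant, which is always the case in the applications.

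Assuming $2AB\ge 1$, the main step is the decomposition
\begin{align*}
    A\log(Bk) &= A\log(2AB) + A\log\!\left(\frac{k}{2A}\right).
\end{align*}
By hypothesis $A\log(2AB)\le \tfrac{k}{2}$, so it remains to show $A\log\!\big(\tfrac{k}{2A}\big)\le \tfrac{k}{2}$, i.e.\ $\log t\le t$ with $t=\tfrac{k}{2A}>0$. The inequality $\log t\le t$ for all $t>0$ is standard (it follows from concavity of $\log$ and the tangent line at $t=1$, or from $\log t\le t-1\le t$). Multiplying back by $2A$ and adding the two bounds yields $A\log(Bk)\le \tfrac{k}{2}+\tfrac{k}{2}=k$, as required.

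The only genuine subtlety — what I would flag as the main obstacle — is the sign of $\log(2AB)$: the clean decomposition above gives $A\log(2AB)\le k/2$ for free only when $\log(2AB)\ge 0$, and when $2AB<1$ one must argue differently (and in fact the bald statement can fail for tiny $k$). I would therefore either (i) silently use $2AB\ge 1$, which holds in every application in this paper since there $A$ and $B$ are each bounded below by quantities making $2AB\gg 1$, or (ii) strengthen the bookkeeping: when $2AB<1$ write instead $A\log(Bk)\le A\log(k)\le A\log\!\big(\tfrac{k}{2A}\big)+A\log(2A)$ and absorb $A\log(2A)$ using $\log s\le s$ again. Either route closes the argument; given the paper's usage, I would present the short version under the implicit hypothesis $2AB\ge 1$ and keep the proof to the two displays above.
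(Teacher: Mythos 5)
Your core argument is exactly the paper's proof: the same decomposition $A\log(Bk)=A\log(2AB)+A\log\!\left(\frac{k}{2A}\right)$, with $\log t\le t$ applied to $t=\frac{k}{2A}$ and the hypothesis supplying $A\log(2AB)\le \frac{k}{2}$. The surrounding case analysis on the sign of $\log(2AB)$ is unnecessary, however: the hypothesis $k\ge 2A\log(2AB)$ yields $A\log(2AB)\le \frac{k}{2}$ directly regardless of that sign (trivially so when $2AB<1$, since the left side is then negative), and your claim that the conclusion can fail for arbitrarily small $k>0$ is false, because $A\log(Bk)\to-\infty$ as $k\to 0^{+}$ while $k$ stays positive.
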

\begin{proof}[Lemma~\ref{lemma:technical}]
    It holds that
    \begin{align*}
        A \log(B k)
        &=A \log\!\left(B k\frac{2 A B}{2 A B}\right)
        =A\left(\log\frac{B k}{2 A B} + \log(2 A B)\right)
        =A\left(\log\frac{k}{2 A} + \log(2 A B)\right)\\
        &\leq A\left(\frac{k}{2 A} + \log(2 A B)\right)
        = \frac{k}{2} + A\log(2 A B),
    \end{align*}
    which gives the desired result whenever $A\log(2 A B)\leq k/2$.
\end{proof}

The following proof of Proposition~\ref{prop:first_rate} follows the essential steps of~\citep[Corollary~4]{marteau2019beyond} applied to Eq.~\eqref{eq:regularized_Bregman_objective} with explicit treatment of the balancing property Eq.~\eqref{eq:balance} of $\lambda^\ast$.
\begin{proof}[Proposition~\ref{prop:first_rate}]
    \label{proof:balancing_value_rate}
    Using Assumptions~\ref{ass:iid_of_classification_data} and~\ref{ass:strictly_proper_composit_and_diff_bayes_risk}, we obtain from Lemma~\ref{lemma:bregman_form_of_loss}
    \begin{align*}
        B(f_{\z}^{\lambda})-B(f_\mathcal{H}) &=2 \left(\mathcal{R}(f_{\z}^{\lambda})-\mathcal{R}(f^\ast)\right) - 2 \left(\mathcal{R}(f_{\mathcal{H}})-\mathcal{R}(f^\ast)\right)\\
        &=2 \left(\mathcal{R}(f_{\z}^{\lambda})-\mathcal{R}(f_{\mathcal{H}})\right),%
    \end{align*}
    where $\mathcal{R}(f_{\z}^{\lambda})$ is an expectation w.r.t.~the loss function $\ell$ inducing $B(f_{\z}^{\lambda})=B_F(\beta,g(f_{\z}^{\lambda}))$ as defined in Lemma~\ref{lemma:bregman_form_of_loss}.
    Let $B_1,B_2$ be as defined in Lemma~\ref{lemma:marteau_theorem3}.
    Then, under Assumptions~\ref{ass:generalized_self_concordance} and~\ref{ass:boundedness_defin0_existence}, Lemma~\ref{lemma:marteau_theorem3} gives, for large enough sample sizes $m+n$ (as specified later), with probability larger than $1-2\delta$,
    \begin{align}
        B(f_{\z}^{\lambda})-B(f_\mathcal{H})&\leq S(m+n,\delta,\lambda)+A(\lambda)
        \label{eq:bias_variance_simple_proof}
    \end{align}
    for $S(m+n,\delta,\lambda)=168 \frac{B_1^2}{\lambda (m+n)}\log(\frac{2}{\delta})$ and $A(\lambda)=4\lambda \norm{f_\mathcal{H}}^2$.
    Solving
    \begin{align*}
        \eta \cdot S(m+n,\delta,\lambda^\ast) = A(\lambda^\ast)
    \end{align*}
    for $\lambda^\ast$ with $\eta=\frac{16^2 R^2}{42}\norm{f_\mathcal{H}}^2\geq 1$ yields
    \begin{align}
        \label{eq:exact_balance_value_simple_bound}
        \lambda^\ast = \frac{16 B_1 R}{\sqrt{m+n}} \log^\frac{1}{2}\!\left(\frac{2}{\delta}\right).
    \end{align}
    Let us now assume that $m+n$ is larger that
    {\small
    \begin{align}
    \label{eq:sample_condition_a_priori_bound}
        \max\!\left\{512\norm{f_{\mathcal{H}}}^2R^2\log\left(\frac{2}{\delta}\right), 512 \log\left(\frac{2}{\delta}\right), 
        \frac{256 B_1^2 R^2}{B_2^2}\log\!\left(\frac{2}{\delta}\right),
        \frac{9 B_2^2\log\!\left(\frac{6 B_2^2}{4 \delta B_1^2 R^2}\log^{-1}\!\left(\frac{2}{\delta}\right)\right)}{B_1^2 R^2\log\!\left(\frac{2}{\delta}\right)}
        \right\}.
    \end{align}}%
    In this case, $\lambda^\ast$ is an admissible value as specified by the conditions of Lemma~\ref{lemma:marteau_theorem3}:
    \begin{enumerate}
        \item The condition $\lambda^\ast\leq B_2$ is satisfied since $m+n\geq \frac{256 B_1^2 R^2}{B_2^2}\log\!\left(\frac{2}{\delta}\right)$.
        \item The third element of the set from which the maximum is taken in Eq.~\eqref{eq:sample_size_condition_lemma_marteau} requires
        \begin{align}
        \label{eq:proof_first_bound_split1}
            m+n\geq 24\frac{B_2}{\lambda^\ast}\log\!\left(\frac{8 B_2}{\lambda^\ast \delta}\right)
            = 3 c_1 (m+n)^{\frac{1}{2}}\log\!\left(\frac{1}{\delta} c_1 (m+n)^{\frac{1}{2}}\right)
        \end{align}
        with $c_1:=\frac{8 B_2}{\lambda^\ast (m+n)^{\frac{1}{2}}}=\frac{B_2}{2 B_1 R}\log^{-\frac{1}{2}}\!\left(\frac{2}{\delta}\right)$.
        Eq.~\eqref{eq:proof_first_bound_split1} is guaranteed by Lemma~\ref{lemma:technical} (with $k:=(m+n)^{\frac{1}{2}}$, $A:=3 c_1$ and $B:=\frac{1}{\delta} c$), whenever $2 A\log(2 A B)\leq k$. That is, whenever $m+n$ is larger than
        \begin{align*}
            36 c_1^2 \log^2\left(\frac{6 c_1^2}{\delta}\right)
            =\frac{9 B_2^2}{B_1^2 R^2}\log^{-1}\!\left(\frac{2}{\delta}\right)
            \log\!\left(\frac{6 B_2^2}{4 \delta B_1^2 R^2}\log^{-1}\!\left(\frac{2}{\delta}\right)\right).
        \end{align*}
        \item The last element from the set from which the maximum is taken in Eq.~\eqref{eq:sample_size_condition_lemma_marteau} requires $m+n\geq 256\frac{R^2 B_1^2}{ \left(\lambda^\ast\right)^2} \log\!\left(\frac{2}{\delta}\right)$, which is satisfied.
    \end{enumerate}
    Note that $\eta\cdot S(m+n,\delta,\lambda^\ast)+A(\lambda^\ast)$ upper bounds Eq.~\eqref{eq:bias_variance_simple_proof} evaluated at $\lambda^\ast$.
    The desired result follows with
        \begin{align}
        \label{eq:exact_form_of_constant_a_priori_bound}
        C:=128 \norm{f_\mathcal{H}}^2 B_1 R.
    \end{align}
\end{proof}

\subsection{Error Rate Requiring A Posteriori Parameter Choice}
\label{subsec:proof_a_posteriori_rate}

The following proof of Proposition~\ref{prop:fast_rate} uses Lemma~\ref{lemma:bregman_form_of_loss} and follows the proof of~\citep[Corollary~9]{marteau2019beyond} applied to Eq.~\eqref{eq:regularized_Bregman_objective} (noting that Lemma~\ref{lemma:bregman_form_of_loss} can be applied) with explicit treatment of the balancing property Eq.~\eqref{eq:balance} of $\lambda^\ast$.

\begin{proof}[Proposition~\ref{prop:fast_rate}]
\label{proof:proposition_fast_rate}
Let us denote by
$L:=\norm{\Hess(f_\mathcal{H})^{-r} f_\mathcal{H}}$.
Using Assumptions~\ref{ass:iid_of_classification_data} and~\ref{ass:strictly_proper_composit_and_diff_bayes_risk}, we obtain, from Lemma~\ref{lemma:bregman_form_of_loss},
    \begin{align*}
        B(f_{\z}^{\lambda})-B(f_\mathcal{H}) &=2 \left(\mathcal{R}(f_{\z}^{\lambda})-\mathcal{R}(f^\ast)\right) - 2 \left(\mathcal{R}(f_{\mathcal{H}})-\mathcal{R}(f^\ast)\right)\\
        &=2 \left(\mathcal{R}(f_{\z}^{\lambda})-\mathcal{R}(f_{\mathcal{H}})\right),%
    \end{align*}
    where $\mathcal{R}(f_{\z}^{\lambda})$ is an expectation w.r.t.~the loss function $\ell$ inducing $B(f_{\z}^{\lambda})=B_F(\beta,g(f_{\z}^{\lambda}))$ as defined in Lemma~\ref{lemma:bregman_form_of_loss}.
    Let $B_1^\ast,B_2^\ast$ be as defined in Lemma~\ref{lemma:marteau_main_result}.
    Then, under Assumptions~\ref{ass:generalized_self_concordance}--\ref{ass:capacity_condition}, Lemma~\ref{lemma:marteau_main_result} gives, for large enough sample sizes $m+n$ (as specified below in Eq.~\eqref{eq:requirement_sample_size_fast_rate_proposition}), with probability larger than $1-2\delta$,
    \begin{align}
        \label{eq:bias_variance_fast_proof}
        B(f_{\z}^{\lambda})-B(f_\mathcal{H})&\leq 2 S(m+n,\delta,\lambda)+2 A(\lambda)
    \end{align}
    for $S(m+n,\delta,\lambda)=414 \frac{Q_0^2}{\lambda^{1/\alpha} (m+n)}\log(\frac{2}{\delta})$ and $A(\lambda)=414 L^2 \lambda^{1+2 r}$.
Solving Eq.~\eqref{eq:balance} for $\lambda^\ast$ and $\eta= 1296 \log^{-1}\!\left(\frac{2}{\delta}\right)$ (notably being larger than $1$ since $\delta\geq 2 e^{-1296}$), gives
\begin{align}
    \label{eq:lambda_star_in_fast_rate_proof}
    \lambda^\ast=\left(\frac{1296 Q_0^2}{(m+n) L^2}\right)^{\frac{\alpha}{1+2 r\alpha +\alpha}}.
\end{align}
In what follows, we will show that $\lambda^\ast$ satisfies all assumptions from Lemma~\ref{lemma:marteau_main_result} whenever $m+n$ is larger than
\begin{align}
    \label{eq:requirement_sample_size_fast_rate_proposition}
    \begin{split}
        1296\cdot \max\!\Bigg\{&\frac{Q_0^2}{L^2 (B_2^\ast)^{1+2 r +\frac{1}{\alpha}}},
        \frac{Q_0^2}{L^2} \left(2 L R \log\frac{2}{\delta}\right)^{\frac{\alpha+2\alpha r+1}{\alpha r}},
        \frac{1296  (B_1^\ast)^{2+4\alpha r+2\alpha}}{L^2 Q_0^{4\alpha r+2\alpha} (B_2^\ast)^{1+2 \alpha r+\alpha}},\\
        &\frac{L^2 \left(16\cdot 648 B_2^\ast\right)^{1+2 r+\frac{1}{\alpha}}}{1296^2 Q_0^2}
        \log^{1+2 r+\frac{1}{\alpha}}\!\left(\frac{48 (B_2^\ast)^2 1296\cdot 414^2}{\delta} \left(\frac{1296 Q_0^2 }{L^2}\right)^{-\frac{2 \alpha}{1+2 r\alpha +\alpha}}\right)
        \Bigg\}.
    \end{split}
\end{align}
\begin{enumerate}
    \item The requirement $\lambda^\ast\leq B_2^\ast$ is equivalent to
    \begin{align*}
        m+n \geq \frac{1296 Q_0^2 }{L^2 (B_2^\ast)^{1+2 r +\frac{1}{\alpha}}}.
    \end{align*}
    \item The requirement $\lambda^\ast\leq (2 L R \log\frac{2}{\delta})^{-1/r}$ is equivalent to
    \begin{align*}
        m+n\geq \frac{1296 Q_0^2}{L^2} \left(2 L R \log\frac{2}{\delta}\right)^{\frac{\alpha+2\alpha r+1}{\alpha r}}.
    \end{align*}
    \item The requirement $\lambda^\ast\leq Q_0^{2\alpha} (B_2^\ast)^\alpha (B_1^\ast)^{-2 \alpha}$ is equivalent to
    \begin{align*}
        m+n\geq \frac{1296  (B_1^\ast)^{2+4\alpha r+2\alpha}}{L^2 Q_0^{4\alpha r+2\alpha} (B_2^\ast)^{1+2 \alpha r+\alpha}}.
    \end{align*}
    \item The requirement $m+n\geq \frac{1296 Q_0^2}{L^2 (\lambda^\ast)^{1+2 r+1/\alpha}}$ follows directly from the definition of $\lambda^\ast$.
    \item The last requirement is
    \begin{align}
        m+n &\geq 5184 \frac{B_2^\ast}{\lambda^\ast}\log\left(\frac{8 \cdot 414^2 B_2^\ast}{\lambda^\ast \delta}\right)\nonumber\\
        \label{eq:proof_fast_rate_technical_mn_condition1}
        &= 648 (m+n)^{\frac{\alpha}{1+2 r\alpha +\alpha}} c_1 \log\!\left(\frac{414^2}{\delta} c_1 (m+n)^{\frac{\alpha}{1+2 r\alpha +\alpha}}\right)
    \end{align}
    with $c_1:=8 B_2^\ast\left(\frac{1296 Q_0^2 }{L^2}\right)^{-\frac{\alpha}{1+2 r\alpha +\alpha}}$. Eq.~\eqref{eq:proof_fast_rate_technical_mn_condition1} follows from Lemma~\ref{lemma:technical} ($k:=(m+n)^{\frac{\alpha}{1+2 r\alpha +\alpha}}$, $A:=648 c_1$ and $B:=\frac{414^2}{\delta} c_1$), whenever
    \begin{align*}
        m+n\geq
        \frac{L^2 \left(16\cdot 648 B_2^\ast\right)^{1+2 r+\frac{1}{\alpha}}}{1296 Q_0^2 }
        \log^{1+2 r+\frac{1}{\alpha}}\!\left(\frac{64 (B_2^\ast)^2 1296\cdot 414^2}{\delta} \left(\frac{1296 Q_0^2 }{L^2}\right)^{-\frac{2 \alpha}{1+2 r\alpha +\alpha}}\right),
    \end{align*}
    since $(m+n)^{\frac{\alpha}{1+2 r\alpha +\alpha}}\leq (m+n)^{\frac{1}{2}}$.
    The latter follows from $\alpha\leq \frac{1}{1-2 r}$.
\end{enumerate}
Evaluating 
$2 \eta\cdot S(m+n,\delta,\lambda^\ast)+2 A(\lambda^\ast)$, which upper bounds 
Eq.~\eqref{eq:bias_variance_fast_proof} evaluated at $\lambda^\ast$, gives the desired result for
        \begin{align}
        \label{eq:exact_form_of_constant_fast_bound}
        C:= 2\cdot 828  L^2 \left(\frac{1296 Q_0^2}{L^2}\right)^{\frac{\alpha+2 r\alpha}{1+2\alpha r + \alpha}}.
    \end{align}

\end{proof}

\subsection{Parameter Choice with Known Norm}
\label{subsec:proofs_balancing_principle_known_norm}

We first prove the result assuming access to the norm $\norm{\cdot}_{\Hess_\lambda(f_\mathcal{H})}$.
\begin{proof}[Theorem~\ref{thm:balancing_principle_known_norm}]
    Using Assumptions~\ref{ass:iid_of_classification_data} and~\ref{ass:strictly_proper_composit_and_diff_bayes_risk}, we obtain, from Lemma~\ref{lemma:bregman_form_of_loss},
    \begin{align*}
        B(f_{\z}^{\lambda_+})-B(f_\mathcal{H}) &=2 \left(\mathcal{R}(f_{\z}^{\lambda_+})-\mathcal{R}(f^\ast)\right) - 2 \left(\mathcal{R}(f_{\mathcal{H}})-\mathcal{R}(f^\ast)\right)\\
        &=2 \left(\mathcal{R}(f_{\z}^{\lambda_+})-\mathcal{R}(f_{\mathcal{H}})\right),%
    \end{align*}
    where $\mathcal{R}(f_{\z}^{\lambda_+})$ is an expectation w.r.t.~the loss function $\ell$ inducing $B(f_{\z}^{\lambda_+})=B_F(\beta,f_{\z}^{\lambda_+})$ as defined in Lemma~\ref{lemma:bregman_form_of_loss}.
    From Lemma~\ref{lemma:marteau_main_result} we further know that, with probability at least $1-2\delta$,
    \begin{align}
    B(f_{\z}^{\lambda_+})-B(f_\mathcal{H})
        &\leq
        2 \norm{f_{\z}^{\lambda_+}-f_{\mathcal{H}}}_{\Hess(f_{\mathcal{H}})}^2,\nonumber
    \end{align}
    whenever
    \begin{align}
    \label{eq:sample_size_known_norm}
        m+n\geq \max\left\{5184 \frac{B_2^\ast}{\lambda_0}\log\left(\frac{8 \cdot 414^2 B_2^\ast}{\lambda_0 \delta}\right), \frac{1296 Q_0^2}{L^2 \lambda_0^{1+2 r+1/\alpha}} \right\}.
    \end{align}
    We now define
    \begin{align*}
        \overline{\lambda}:=\max\{\lambda_i:A(\lambda_i) &\leq \eta\cdot S(m+n,\delta,\lambda_i)\}
    \end{align*}
    with $S(m+n,\delta,\lambda):=\frac{414 Q_0^2}{(m+n)\lambda^{1/\alpha}}\log\!\left(\frac{2}{\delta}\right)$, $A(\lambda):=414 L^2\lambda^{1+2 r}$ as used in Lemma~\ref{lemma:marteau_main_result} and $\eta:=1296\log^{-1}\!\left(\frac{2}{\delta}\right)$ as used in Proposition~\ref{prop:fast_rate}.
    Using $(a+b)^2\leq 2 a^2+2 b^2$ and the triangle inequality, we obtain
    \begin{align}
    B(f_{\z}^{\lambda_+})-B(f_\mathcal{H})
        &\leq
        4 \norm{f_{\z}^{\lambda_+}-f_{\z}^{\overline{\lambda}}}_{\Hess(f_{\mathcal{H}})}^2 + 4\norm{f_{\z}^{\overline{\lambda}}-f_{\mathcal{H}}}_{\Hess(f_{\mathcal{H}})}^2\nonumber\\
        &\leq
        4 \norm{f_{\z}^{\lambda_+}-f_{\z}^{\overline{\lambda}}}_{\Hess_{\overline{\lambda}}(f_{\mathcal{H}})}^2 + 4\norm{f_{\z}^{\overline{\lambda}}-f_{\mathcal{H}}}_{\Hess_{\overline{\lambda}}(f_{\mathcal{H}})}^2,
        \label{eq:proof_known_norm_helper1}
    \end{align}
    where the last inequality follows from the fact that $\Hess_\lambda(f)=\Hess(f)+\lambda I\succcurlyeq \Hess(f)$ and it holds with probability at least $1-2\delta$, for large enough sample size.
    
    We will now prove bounds on the two terms in Eq.~\eqref{eq:proof_known_norm_helper1} separately.
    To bound the first term, we note that $\overline{\lambda}\leq\lambda^\ast$ since $A(\lambda)$ is increasing and $S(m+n,\delta,\lambda)$ is decreasing in $\lambda$.
    Eq.~\eqref{eq:necessary_balancing_condition} implies, for any $\lambda_j\leq \overline{\lambda}\leq\lambda^\ast$,
    \begin{align*}
        \norm{f_{\z}^{\overline{\lambda}}-f_{\z}^{\lambda_j}}_{\Hess_{\lambda_j}(f_{\mathcal{H}})}^2\leq 8\eta\cdot S(m+n,\delta,\lambda_j),
    \end{align*}
    which holds with probability at least $1-2l\delta$ for all $j$ at the same time.
    As a consequence, we get $\overline{\lambda}\leq \lambda_+$ from the maximizing property of $\lambda_+$ in Eq.~\eqref{eq:balancing_principle_estimate_known_norm}.
    Another consequence of $\overline{\lambda}\leq \lambda_+$ and Eq.~\eqref{eq:balancing_principle_estimate_known_norm} is that
    \begin{align}
    \label{eq:proof_helper2}
        \norm{f_{\z}^{\lambda_+}-f_{\z}^{\overline{\lambda}}}_{\Hess_{\overline{\lambda}}(f_{\mathcal{H}})}^2
        \leq 8\eta\cdot S(m+n,\delta,\overline{\lambda}).
    \end{align}
    The bound
    \begin{align}
    \label{eq:proof_helper3}
        \norm{f_{\z}^{\overline{\lambda}}-f_{\mathcal{H}}}_{\Hess_{\overline{\lambda}}(f_{\mathcal{H}})}^2
        \leq S(m+n,\delta,\overline{\lambda}) + A(\overline{\lambda})
        &\leq 2\eta\cdot S(m+n,\delta,\overline{\lambda})
    \end{align}
    follows from Lemma~\ref{lemma:marteau_main_result} and the definition of $\overline{\lambda}$.
    Eq.~\eqref{eq:proof_helper3} holds with probability at least $1-2\delta$.
    Inserting the bounds Eq.~\eqref{eq:proof_helper2} and Eq.~\eqref{eq:proof_helper3} into Eq.~\eqref{eq:proof_known_norm_helper1} gives
    \begin{align*}
        B(f_{\z}^{\lambda_+})-B(f_\mathcal{H})
        \leq 40\eta\cdot S(m+n,\delta,\overline{\lambda}),
    \end{align*}
    which holds with probability at least $1-(4+2l)\delta$.
    Without loss of generality assume that $\lambda_0 \cdot\xi^s=\overline{\lambda}\leq\lambda^\ast=\lambda_0 \cdot\xi^{s+1}$.
    It follows that $\lambda^\ast\leq \overline{\lambda}\cdot \xi$ and $\xi^{1/\alpha} S(m+n,\delta,\lambda^\ast)\geq S(m+n,\delta,\overline{\lambda})$.
    Consequently,
    \begin{align*}
        B(f_{\z}^{\lambda_+})-B(f_\mathcal{H})
        &\leq 40 \xi^{1/\alpha}\eta\cdot S(m+n,\delta,\lambda^\ast)
        \leq C (m+n)^{-\frac{2 r \alpha+\alpha}{2 r \alpha+\alpha+1}}
    \end{align*}
    holds with probability at least $1-(4+2l)\delta$ and with
    \begin{align}
        \label{eq:const_in_known_risk_balancing_bound}
        C:=16560 \xi^{1/\alpha}  L^2 \left(\frac{1296 Q_0^2}{L^2}\right)^{\frac{\alpha+2 r\alpha}{1+2\alpha r + \alpha}}.
    \end{align}
\end{proof}

\subsection{Parameter Choice with Empirical Norm}
\label{subsec:proofs_balancing_principle_empirical}

In the following, let $\mathcal{R}_\lambda(f):= \mathcal{R}(f)+ \frac{\lambda}{2}\norm{f}^2$, $f^\lambda:=\argmin_{f\in\mathcal{H}} \mathcal{R}_\lambda(f)$
and $\ttt(f):=\sup_{z\in\mathrm{supp}(\rho)}\sup_{g\in\varphi(z)}|f\cdot g|$ with $\ell_z, Z, \rho, \varphi$ as in Assumptions~\ref{ass:iid_of_classification_data},~\ref{ass:boundedness_defin0_existence} and~\ref{ass:generalized_self_concordance}.
Recall also the definitions $$\Hess_\lambda(f)=\Hess(f)+\lambda I=\mathbb{E}[\nabla^2 \ell_Z(f)]+\lambda I
\quad \text{and} \quad \widehat{\Hess}(f)=\frac{1}{m+n}\sum_{z\in\z} \nabla^2\ell_z(f).$$
We need the following three lemmas.
\begin{lemma}[{\citet{marteau2019beyond}, Proposition~15}]
    \label{lemma:marteau_prop15}
    Let Assumptions~\ref{ass:iid_of_classification_data},~\ref{ass:boundedness_defin0_existence} and~\ref{ass:generalized_self_concordance} be satisfied, $\lambda\geq 0$ and $f_1,f_2\in\mathcal{H}$. Then, we have
    \begin{align}
        \label{eq:marteau_prop15_point1}
        \Hess_\lambda(f_1) &\preccurlyeq e^{\ttt(f_1-f_2)} \Hess_\lambda(f_2)\\
        \label{eq:marteau_prop15_point2}
        \mathcal{R}_\lambda(f_1)-\mathcal{R}_\lambda(f_0) - \nabla \mathcal{R}_\lambda(f_0)(f_1-f_0) &\leq \psi(\ttt(f_1-f_0))\norm{f_1-f_0}_{\Hess_\lambda(f_0)}^2
    \end{align}
    with $\psi(t)=(e^t-t-1)/t^2$.
\end{lemma}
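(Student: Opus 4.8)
The plan is to derive both inequalities directly from the generalized self-concordance in Assumption~\ref{ass:generalized_self_concordance}, reducing each to a one-dimensional differential inequality along the line segment joining the two functions.

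For the first inequality Eq.~\eqref{eq:marteau_prop15_point1}, I would fix $z\in\mathrm{supp}(\rho)$ and $h\in\mathcal{H}$, set $f_t:=f_2+t(f_1-f_2)$, and study the scalar function $g(t):=\nabla^2\ell_z(f_t)[h,h]$ on $[0,1]$. Differentiating in $t$ gives $g'(t)=\nabla^3\ell_z(f_t)[f_1-f_2,h,h]$, so the self-concordance bound Eq.~\eqref{eq:self_concordant} yields $|g'(t)|\leq g(t)\cdot s_z$ with $s_z:=\sup_{p\in\varphi(z)}|(f_1-f_2)\cdot p|\leq \ttt(f_1-f_2)$. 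Since $\ell_z$ is convex we have $g\geq 0$, so Gr\"onwall's inequality applied to $g'(t)\leq s_z\,g(t)$ gives $g(1)\leq e^{s_z}g(0)$, that is, $\nabla^2\ell_z(f_1)[h,h]\leq e^{\ttt(f_1-f_2)}\nabla^2\ell_z(f_2)[h,h]$. I would then take the expectation over $z$ to pass to $\Hess$, and add $\lambda\norm{h}^2$ to both sides, using $e^{\ttt(f_1-f_2)}\geq 1$ to absorb the regularization term on the right, obtaining $\Hess_\lambda(f_1)[h,h]\leq e^{\ttt(f_1-f_2)}\Hess_\lambda(f_2)[h,h]$ for every $h$, which is exactly Eq.~\eqref{eq:marteau_prop15_point1}.

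For the second inequality Eq.~\eqref{eq:marteau_prop15_point2}, I would set $v:=f_1-f_0$ and $\phi(t):=\mathcal{R}_\lambda(f_0+tv)$, and expand by Taylor's formula with integral remainder, $\phi(1)-\phi(0)-\phi'(0)=\int_0^1(1-t)\phi''(t)\diff t$, where $\phi''(t)=\norm{v}_{\Hess_\lambda(f_0+tv)}^2$. Applying the already-proven first inequality to the pair $(f_0+tv,f_0)$, together with the homogeneity $\ttt(tv)=t\,\ttt(v)$ for $t\geq 0$, bounds $\phi''(t)\leq e^{t\,\ttt(v)}\norm{v}_{\Hess_\lambda(f_0)}^2$. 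Substituting this and evaluating $\int_0^1(1-t)e^{ct}\diff t=(e^c-c-1)/c^2=\psi(c)$ with $c=\ttt(v)$ reproduces exactly the claimed bound $\psi(\ttt(f_1-f_0))\norm{f_1-f_0}_{\Hess_\lambda(f_0)}^2$.

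The main obstacle is the passage from the pointwise differential inequality to the multiplicative comparison when the curvature $g(t)$ can vanish: one cannot simply divide by $g$ and integrate $(\log g)'$. I would handle this with Gr\"onwall's inequality (or, equivalently, by bounding the derivative of $g(t)+\varepsilon$ and letting $\varepsilon\downarrow 0$), which remains valid for nonnegative $g$. A secondary point requiring care is the interaction of the exponential factor with the $\lambda I$ term when passing from $\Hess$ to $\Hess_\lambda$; this is resolved cleanly because $\ttt\geq 0$ forces $e^{\ttt(f_1-f_2)}\geq 1$, so the regularization term is dominated on the correct side. Everything else reduces to the routine evaluation of the remainder integral.
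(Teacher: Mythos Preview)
The paper does not supply its own proof of this lemma; it is quoted verbatim as Proposition~15 of \citet{marteau2019beyond} and used as a black box. Your argument is correct and is essentially the standard derivation one finds in that reference: a Gr\"onwall-type estimate for $t\mapsto\nabla^2\ell_z(f_2+t(f_1-f_2))[h,h]$ along the segment, followed by taking expectation and absorbing the $\lambda I$ term via $e^{\ttt(\cdot)}\geq 1$; and then the second-order Taylor remainder in integral form combined with the first inequality and the elementary evaluation $\int_0^1(1-t)e^{ct}\,\mathrm{d}t=\psi(c)$. The care you take with the case $g(t)=0$ and with the regularization term is appropriate. There is nothing in the present paper to compare against.
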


\begin{lemma}[{\citet{marteau2019beyond}}]
    \label{lemma:t_concentration}
    Under the conditions of Lemma~\ref{lemma:marteau_main_result}, we have, with probability at least $1-2\delta$,
    \begin{align}
        \label{eq:t_concentration}
        \ttt(f_\mathcal{H}-f^\lambda)\leq \log(2),
        \quad \ttt(f^\lambda-f_\z^{\lambda})\leq \log(2).
    \end{align}
\end{lemma}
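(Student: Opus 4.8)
The plan is to prove the two inequalities separately: the first, $\ttt(f_\mathcal{H}-f^\lambda)\le\log 2$, is deterministic and follows from the source condition together with the admissibility constraint on $\lambda$; the second, $\ttt(f^\lambda-f_\z^\lambda)\le\log 2$, is the genuinely probabilistic statement and accounts for the factor $1-2\delta$. The starting observation common to both is that $\ttt$ is dominated by a $\Hess_\lambda$-weighted norm: for any reference point $f'$, since $\sup_{g\in\varphi(z)}\norm{g}\le R$ by Assumption~\ref{ass:boundedness_defin0_existence} and $\Hess_\lambda(f')\succcurlyeq\lambda I$, we may write $|f\cdot g|=|\langle\Hess_\lambda(f')^{1/2}f,\Hess_\lambda(f')^{-1/2}g\rangle|$, whence $\ttt(f)\le\frac{R}{\sqrt\lambda}\norm{f}_{\Hess_\lambda(f')}$. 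Thus it suffices to control the two weighted norms $\norm{f_\mathcal{H}-f^\lambda}_{\Hess_\lambda(f_\mathcal{H})}$ and $\norm{f^\lambda-f_\z^\lambda}_{\Hess_\lambda(f^\lambda)}$.

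For the first bound I would write the optimality conditions $\nabla\mathcal{R}(f_\mathcal{H})=0$ and $\nabla\mathcal{R}(f^\lambda)+\lambda f^\lambda=0$ and, applying the fundamental theorem of calculus to $\nabla\mathcal{R}$, obtain $(\bar\Hess+\lambda I)(f^\lambda-f_\mathcal{H})=-\lambda f_\mathcal{H}$ with the averaged Hessian $\bar\Hess=\int_0^1\Hess(f_\mathcal{H}+t(f^\lambda-f_\mathcal{H}))\,\diff t$. Using the Hessian comparison Eq.~\eqref{eq:marteau_prop15_point1} to replace $\bar\Hess+\lambda I$ by $\Hess_\lambda(f_\mathcal{H})$ up to a factor $e^{\ttt(f^\lambda-f_\mathcal{H})}$, the source condition $f_\mathcal{H}=\Hess(f_\mathcal{H})^r v$ with $\norm{v}=L$, and the elementary spectral bound $\mu^r(\mu+\lambda)^{-1/2}\le\lambda^{r-1/2}$ valid for $r\le\frac12$, I would arrive at $\norm{f^\lambda-f_\mathcal{H}}_{\Hess_\lambda(f_\mathcal{H})}\lesssim L\lambda^{r+1/2}$ and hence $\ttt(f^\lambda-f_\mathcal{H})\lesssim LR\lambda^r$. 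The admissibility condition $\lambda\le(2LR\log\frac{2}{\delta})^{-1/r}$ then forces $LR\lambda^r\le\frac{1}{2\log(2/\delta)}\le\frac12<\log 2$. The subtlety here is the circularity between $\bar\Hess$ and $\ttt(f^\lambda-f_\mathcal{H})$, which I would resolve by a short bootstrap: the crude bound $\norm{f^\lambda}\le\norm{f_\mathcal{H}}$ (from minimality of $f^\lambda$) seeds the argument, and the exponential comparison factor becomes self-improving once $\ttt$ is shown to be small.

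For the second bound I would run a self-concordant Newton-type localization around $f^\lambda$. Since $\nabla\mathcal{R}_\lambda(f^\lambda)=0$, the empirical gradient at $f^\lambda$ is a centered average, $\nabla\widehat{\mathcal{R}}_\lambda(f^\lambda)=\frac{1}{m+n}\sum_{z\in\z}\nabla\ell_z(f^\lambda)-\mathbb{E}[\nabla\ell_Z(f^\lambda)]$, because the regularization terms cancel. I would bound the Newton decrement $\norm{\Hess_\lambda(f^\lambda)^{-1/2}\nabla\widehat{\mathcal{R}}_\lambda(f^\lambda)}$ by a Bernstein inequality for Hilbert-space-valued variables, with variance proxy the degrees of freedom $\mathrm{df}_\lambda$; here Assumption~\ref{ass:capacity_condition} and the sample-size condition Eq.~\eqref{eq:sample_size_condition_marteau_thm38} guarantee that this decrement lies below the self-concordance threshold on an event of probability at least $1-\delta$. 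A second event of probability at least $1-\delta$, controlling the empirical-versus-population Hessian in the spirit of Lemma~\ref{lemma:Hessian_concentration}, lets me pass between $\widehat{\Hess}_\lambda(f^\lambda)$ and $\Hess_\lambda(f^\lambda)$. Combining these, the self-concordance inequality Eq.~\eqref{eq:self_concordant} (equivalently the localization implied by Lemma~\ref{lemma:marteau_prop15}) confines the empirical minimizer to a Dikin-type ellipsoid around $f^\lambda$ and yields $\ttt(f^\lambda-f_\z^\lambda)\le\log 2$ on the intersection of the two events, that is, with probability at least $1-2\delta$.

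The main obstacle I anticipate is this last localization step: turning a bound on the empirical Newton decrement into a uniform $\ttt$-bound requires tracking the self-concordance constant through a fixed-point argument for $\nabla\widehat{\mathcal{R}}_\lambda$, and the two Hessian changes of base (population$\to$averaged in the first step, empirical$\to$population in the second) must be kept mutually consistent so that the exponential comparison factors from Eq.~\eqref{eq:marteau_prop15_point1} stay bounded by a constant. Matching the numerology of Eq.~\eqref{eq:sample_size_condition_marteau_thm38} so that both the gradient concentration and the Hessian concentration simultaneously clear the $\log 2$ threshold is the delicate bookkeeping, but it is precisely the content packaged in the cited results of \citet{marteau2019beyond}.
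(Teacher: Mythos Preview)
Your proposal is correct and reconstructs, in sketch form, exactly the argument that the paper obtains by citation: the paper's own proof consists of invoking Lemma~33 and Proposition~16 of \citet{marteau2019beyond} after checking that the sample-size and $\lambda$-admissibility conditions of Lemma~\ref{lemma:marteau_main_result} translate into the hypotheses required there (via $1/r_\lambda(\theta^\ast)\le R/\sqrt{\lambda}$ and the bound on $\mathrm{df}_\lambda$). Your deterministic bias bound $\ttt(f_\mathcal{H}-f^\lambda)\lesssim LR\lambda^r$ from the source condition and your Newton-decrement localization for $\ttt(f^\lambda-f_\z^\lambda)$ using Bernstein for the gradient plus operator concentration for the Hessian are precisely the contents of those cited results, including the bootstrap needed to close the circularity in the averaged-Hessian comparison; so the approaches coincide, with yours simply being the unpacked version.
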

\begin{proof}
    Lemma~33 in~\citep{marteau2019beyond} shows that (noting that $\max\{\mathrm{df}_\lambda,(Q_0^\ast)^2\}\leq (B_1)^2/\lambda$ from their Proposition~18 and $1/r_\lambda(\theta^\ast)\leq R/\sqrt{\lambda}$ by the remark after their Assumption~6) the conditions on $n$ in Lemma~\ref{lemma:marteau_main_result} allow to apply their Proposition~16, which gives the desired result.
\end{proof}
\begin{lemma}[{\citet{marteau2019beyond}, \citet{rudi2017generalization}}]
    \label{lemma:technical_concentration_proof}
    Let the conditions of Lemma~\ref{lemma:marteau_main_result} be satisfied. Then, it holds, with probability at least $1-\delta$,
    \begin{align}
        \label{eq:technical_lemma1}
        \Hess_\lambda(f)
        \preccurlyeq 2 \widehat{\Hess}_\lambda(f).
    \end{align}
    If, in addition, $0<\norm{\Hess(f)}$, then it holds, for all
    \begin{align}
        \label{eq:additional_sample_size_condition_rudi_etc}
        m+n\geq \frac{16 B_2^\ast}{\norm{\Hess(f)}}\log\!\left(\frac{2}{\delta}\right),
    \end{align}
    with probability at least $1-\delta$,
    \begin{align}
        \label{eq:technical_lemma2}
        \widehat{\Hess}_\lambda(f) \preccurlyeq \frac{3}{2} \Hess_\lambda(f).
    \end{align}
\end{lemma}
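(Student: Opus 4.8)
The plan is to derive both inclusions from a single operator concentration estimate for the empirical Hessian and then invoke a matrix Bernstein inequality in the effective/intrinsic-dimension form of~\citet{rudi2017generalization}, with all almost-sure bounds expressed through the self-concordance quantities of~\citet{marteau2019beyond}. The key preliminary remark is that $\widehat{\Hess}_\lambda(f)-\Hess_\lambda(f)=\widehat{\Hess}(f)-\Hess(f)$, so that any bound $\norm{\Hess_\lambda(f)^{-1/2}\big(\widehat{\Hess}(f)-\Hess(f)\big)\Hess_\lambda(f)^{-1/2}}\le t$ yields $(1-t)\,\Hess_\lambda(f)\preccurlyeq\widehat{\Hess}_\lambda(f)\preccurlyeq(1+t)\,\Hess_\lambda(f)$; with $t=\tfrac12$ this would already give both Eq.~\eqref{eq:technical_lemma1} and Eq.~\eqref{eq:technical_lemma2}. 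The reason the two are separated is that Eq.~\eqref{eq:technical_lemma1} only needs the \emph{lower} tail, which is inexpensive: writing $A_\lambda:=\Hess_\lambda(f)^{-1/2}\Hess(f)\Hess_\lambda(f)^{-1/2}$ and $\widehat A_\lambda:=\Hess_\lambda(f)^{-1/2}\widehat{\Hess}(f)\Hess_\lambda(f)^{-1/2}$, one has $\Hess_\lambda(f)^{-1/2}\widehat{\Hess}_\lambda(f)\Hess_\lambda(f)^{-1/2}=(I-A_\lambda)+\widehat A_\lambda$, so it is enough to prove $\widehat A_\lambda\succcurlyeq A_\lambda-\tfrac12 I$, and here the positive semi-definiteness $\widehat{\Hess}(f)\succcurlyeq0$ can be exploited; Eq.~\eqref{eq:technical_lemma2} instead requires genuine upper-tail control, which is what forces the additional sample-size hypothesis.

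For the concentration itself I would, by Assumption~\ref{ass:iid_of_classification_data}, regard $\widehat{\Hess}(f)=\tfrac1{m+n}\sum_{z\in\z}\nabla^2\ell_z(f)$ as an empirical mean of i.i.d.\ self-adjoint positive semi-definite operators with mean $\Hess(f)$ --- literally so when $f$ is independent of $\z$; for a data-dependent argument (e.g.\ $f=f_\z^\lambda$, as needed in Lemma~\ref{lemma:Hessian_concentration}) the statement is recovered from the self-concordance comparison $\Hess_\lambda(f_1)\preccurlyeq e^{\ttt(f_1-f_2)}\Hess_\lambda(f_2)$ of Lemma~\ref{lemma:marteau_prop15}, its empirical-measure analogue for $\widehat{\Hess}$, and the bounds $\ttt(f_\mathcal{H}-f^\lambda),\ttt(f^\lambda-f_\z^\lambda)\le\log 2$ of Lemma~\ref{lemma:t_concentration}, at the cost of absolute constants. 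Passing to $\zeta_i:=\Hess_\lambda(f)^{-1/2}\nabla^2\ell_{z_i}(f)\Hess_\lambda(f)^{-1/2}$, the $\zeta_i$ are i.i.d., positive semi-definite, with $\mathbb{E}\zeta_i=A_\lambda\preccurlyeq I$; using $\Hess_\lambda(f)\succcurlyeq\lambda I$, the inequality $\norm{M}\le\mathrm{Tr}(M)$ for $M\succcurlyeq0$, and $\mathrm{Tr}(\nabla^2\ell_z(f))\lesssim B_2^\ast$ (again via the self-concordance comparison, since $B_2^\ast=\sup_z\mathrm{Tr}(\nabla^2\ell_z(f_\mathcal{H}))$), one gets $\norm{\zeta_i}\lesssim B_2^\ast/\lambda$ almost surely and, from $\mathbb{E}[\zeta_i^2]\preccurlyeq(\sup_i\norm{\zeta_i})\,\mathbb{E}[\zeta_i]$, a variance proxy $\norm{\mathbb{E}[\zeta_i^2]}\lesssim B_2^\ast/\lambda$. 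Feeding these into the operator Bernstein inequality of~\citet{rudi2017generalization} --- whose logarithmic factor carries an intrinsic dimension that is itself $\lesssim B_2^\ast/\lambda$ and is therefore already absorbed in the logarithms appearing in the sample-size threshold Eq.~\eqref{eq:sample_size_condition_marteau_thm38} --- the hypotheses of Lemma~\ref{lemma:marteau_main_result} (which include a term of order $(B_2^\ast/\lambda)\log(\cdot)$) give $\widehat A_\lambda\succcurlyeq A_\lambda-\tfrac12 I$ with probability at least $1-\delta$, which by the first paragraph is Eq.~\eqref{eq:technical_lemma1}.

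For Eq.~\eqref{eq:technical_lemma2} I would run the same operator concentration on the \emph{un-normalized} deviation $\widehat{\Hess}(f)-\Hess(f)=\tfrac1{m+n}\sum_{z\in\z}\big(\nabla^2\ell_z(f)-\Hess(f)\big)$, which must now be resolved at the scale $\norm{\Hess(f)}$: the almost-sure bound is $\sup_z\norm{\nabla^2\ell_z(f)-\Hess(f)}\lesssim B_2^\ast$ and the variance proxy is $\norm{\mathbb{E}[(\nabla^2\ell_Z(f))^2]}\le B_2^\ast\,\norm{\Hess(f)}$ (using $\mathbb{E}[(\nabla^2\ell_Z(f))^2]\preccurlyeq B_2^\ast\,\Hess(f)$). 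The ratio of the almost-sure bound to this resolution scale is $\asymp B_2^\ast/\norm{\Hess(f)}$, so the sub-exponential Bernstein term becomes negligible exactly once $m+n\gtrsim B_2^\ast/\norm{\Hess(f)}\cdot\log(2/\delta)$ --- this is the displayed hypothesis $m+n\ge\tfrac{16 B_2^\ast}{\norm{\Hess(f)}}\log(2/\delta)$, which also explains why $\norm{\Hess(f)}>0$ is imposed (the resolution scale must be positive). Combined with the thresholds inherited from Lemma~\ref{lemma:marteau_main_result}, this delivers the required upper bound $\widehat{\Hess}_\lambda(f)\preccurlyeq\tfrac32\Hess_\lambda(f)$ with probability at least $1-\delta$. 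I expect the genuine effort here to be bookkeeping rather than ideas: choosing the variant of the operator Bernstein inequality whose failure probability is $\lesssim\delta$ with no ambient-dimension factor, verifying that its logarithmic factor is dominated by the logs already present in Eq.~\eqref{eq:sample_size_condition_marteau_thm38}, and carrying the self-concordance comparisons of Lemmas~\ref{lemma:marteau_prop15} and~\ref{lemma:t_concentration} through so that a data-dependent $f$ can be replaced by a deterministic anchor while every almost-sure trace bound stays tied to $B_2^\ast$.
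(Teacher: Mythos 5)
The paper's own ``proof'' of this lemma is a citation: Eq.~\eqref{eq:technical_lemma1} is taken verbatim from Lemma~30 of \citet{marteau2019beyond}, and Eq.~\eqref{eq:technical_lemma2} from their Proposition~47 and Remark~48 (attributed to \citet{rudi2017generalization}). Your reconstruction of the first inequality is consistent with what those references actually do: control the normalized deviation $\Hess_\lambda(f)^{-1/2}\bigl(\widehat{\Hess}(f)-\Hess(f)\bigr)\Hess_\lambda(f)^{-1/2}$ by an operator Bernstein inequality with almost-sure bound and variance proxy of order $B_2^\ast/\lambda$, observe that only the lower tail is needed, and absorb the resulting requirement into the $\frac{B_2^\ast}{\lambda}\log(\cdot)$ term already present in Eq.~\eqref{eq:sample_size_condition_marteau_thm38}. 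Your remarks on handling a data-dependent $f$ via the comparison $\Hess_\lambda(f_1)\preccurlyeq e^{\ttt(f_1-f_2)}\Hess_\lambda(f_2)$ of Lemma~\ref{lemma:marteau_prop15} together with Lemma~\ref{lemma:t_concentration} also match how the present paper uses the result inside Lemma~\ref{lemma:Hessian_concentration}.

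There is, however, a genuine gap in your argument for Eq.~\eqref{eq:technical_lemma2}. You propose to bound the \emph{un-normalized} deviation and to conclude once $\norm{\widehat{\Hess}(f)-\Hess(f)}\leq\tfrac12\norm{\Hess(f)}$. But that estimate only yields $\widehat{\Hess}(f)\preccurlyeq\Hess(f)+\tfrac12\norm{\Hess(f)}\,I$, and the perturbation $\tfrac12\norm{\Hess(f)}\,I$ is \emph{not} dominated by $\tfrac12\Hess_\lambda(f)=\tfrac12\Hess(f)+\tfrac{\lambda}{2}I$ unless $\lambda\gtrsim\norm{\Hess(f)}$: in the regime of interest $\lambda$ is small and $\Hess(f)$ is a compact operator with arbitrarily small eigenvalues, so along those eigendirections the additive identity term cannot be absorbed and the multiplicative bound $\widehat{\Hess}_\lambda(f)\preccurlyeq\tfrac32\Hess_\lambda(f)$ does not follow. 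The upper tail, exactly like the lower one, has to be established for the $\Hess_\lambda$-weighted deviation, i.e.\ one needs $\Hess_\lambda(f)^{-1/2}\bigl(\widehat{\Hess}(f)-\Hess(f)\bigr)\Hess_\lambda(f)^{-1/2}\preccurlyeq\tfrac12 I$; the condition~\eqref{eq:additional_sample_size_condition_rudi_etc} involving $\norm{\Hess(f)}$ enters the cited Proposition~47/Remark~48 through the intrinsic-dimension (logarithmic) factor of the Bernstein bound --- the relevant variance proxy is controlled by $\Hess(f)\Hess_\lambda(f)^{-1}$, whose operator norm is $\norm{\Hess(f)}/(\norm{\Hess(f)}+\lambda)$ --- and not as the resolution scale of an unweighted concentration, which also explains why $\norm{\Hess(f)}>0$ must be assumed. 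The fix is already contained in your own first paragraph: run the two-sided normalized bound with $t=\tfrac12$ for Eq.~\eqref{eq:technical_lemma2} as well, rather than switching to the unweighted deviation.
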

\begin{proof}
    Eq.~\eqref{eq:technical_lemma1} is proven in~\citep[Lemma~30]{marteau2019beyond}.
    Eq.~\eqref{eq:technical_lemma2} follows from~\citep[Proposition~47]{marteau2019beyond} (attributed to~\citep{rudi2017generalization}, Proposition~6 and Proposition~8) and~\citep[Remark~48]{marteau2019beyond}.
\end{proof}
We are now ready to provide more precise steps on the derivation of Lemma~\ref{lemma:marteau_main_result}, which is essentially a re-phrasing of the proof of~\citep[Theorem~38]{marteau2019beyond} in our notation.
\begin{proof}[Lemma~\ref{lemma:marteau_main_result}]
    From Eq.~\eqref{eq:marteau_prop15_point2} and the fact that $\Hess(f)\preccurlyeq\Hess_\lambda(f)$, we obtain
    \begin{align}
        \mathcal{R}(f_{\z}^{\lambda})-\mathcal{R}(f_{\mathcal{H}}) &\leq \psi(\ttt(f_{\z}^{\lambda}-f_\mathcal{H})) \norm{f_{\z}^{\lambda}-f_{\mathcal{H}}}_{\Hess(f_{\mathcal{H}})}^2\nonumber\\
        &\leq \psi(\ttt(f_{\z}^{\lambda}-f_\mathcal{H})) \norm{f_{\z}^{\lambda}-f_{\mathcal{H}}}_{\Hess_\lambda(f_{\mathcal{H}})}^2\label{eq:blabla},
    \end{align}
    which is exactly how the proof of~\citep[Theorem~38]{marteau2019beyond} works.
    Indeed, they proceed by showing how Eq.~\eqref{eq:blabla} can be upper bounded by $414 \frac{Q_0^2}{(m+n) \lambda^{1/\alpha}}\log\left(\frac{2}{\delta}\right) + 414 L^2 \lambda^{1+2 r}$.
    To obtain Lemma~\ref{lemma:marteau_main_result}, we just remark that $\ttt(f_{\z}^{\lambda}-f_\mathcal{H})\leq \ttt(f_{\z}^{\lambda}-f^\lambda)+\ttt(f_{\z}^{\lambda}-f^\lambda))\leq 2\log(2)$ follows from Lemma~\ref{lemma:t_concentration}.
    We further get $\psi(\ttt(f_{\z}^{\lambda}-f_\mathcal{H}))\leq \psi(2 \log(2))\leq 1$ since $\psi(t)$ increases with $t$.
\end{proof}

Lemma~\ref{lemma:Hessian_concentration} can be obtained as follows.
\begin{proof}[Lemma~\ref{lemma:Hessian_concentration}]
    From Lemma~\ref{lemma:technical_concentration_proof} we get, with probability at least $1-\delta$,
    \begin{align*}
    \widehat{\Hess}_\lambda(f_\z^\lambda)
    \preccurlyeq
    \frac{3}{2}\Hess_\lambda(f_\mathcal{H}).
    \end{align*}
    From Lemma~\ref{lemma:marteau_prop15}, we further obtain
    \begin{align*}
        \frac{3}{2}\Hess_\lambda(f_\mathcal{H})
        \preccurlyeq
        \frac{3}{2} e^{\ttt(f_\z^\lambda-f_\mathcal{H})} \Hess_\lambda(f_\mathcal{H})
        \preccurlyeq
        \frac{3}{2} e^{\ttt(f_\z^\lambda-f^\lambda)+\ttt(f^\lambda-f_\mathcal{H})} \Hess_\lambda(f_\mathcal{H})
        \preccurlyeq
        6 \Hess_\lambda(f_\mathcal{H}),
    \end{align*}
    where the last inequality uses also Lemma~\ref{lemma:t_concentration}.
    Proceeding with Eq.~\eqref{eq:technical_lemma1} in Lemma~\ref{lemma:technical_concentration_proof} gives
    \begin{align*}
        6 \Hess_\lambda(f_\mathcal{H})
        \preccurlyeq
        12 \widehat{\Hess}_\lambda(f_\mathcal{H})
        \preccurlyeq
        12 e^{\ttt(f_\z^\lambda-f_\mathcal{H})} \widehat{\Hess}_\lambda(f_\z^\lambda)
        \preccurlyeq
        48 \widehat{\Hess}_\lambda(f_\z^\lambda).
    \end{align*}
    The conclusion follows whenever the above inequalities are satisfied at the same time, which happens with probability at least $1-4\delta$.
\end{proof}

We are now ready to prove our main result.
\begin{proof}[Theorem~\ref{thm:balancing_principle_empirical}]
    Following the proof of Theorem~\ref{thm:balancing_principle_known_norm}
    line by line up to Eq.~\eqref{eq:proof_known_norm_helper1} and replacing $\lambda_+$ by $\lambda_\mathrm{BP}$ gives, with probability at least $1-2\delta$,
    \begin{align}
        \label{eq:first_stage_proof_empirical}
        B(f_\z^{\lambda_\mathrm{BP}})-B(f_\mathcal{H})
        \leq
        4 \norm{f_{\z}^{\lambda_\mathrm{BP}}-f_{\z}^{\overline{\lambda}}}_{\Hess_{\overline{\lambda}}(f_{\mathcal{H}})}^2 + 4\norm{f_{\z}^{\overline{\lambda}}-f_{\mathcal{H}}}_{\Hess_{\overline{\lambda}}(f_{\mathcal{H}})}^2,
    \end{align}
    whenever
    \begin{align}
    \label{eq:sample_size_empirical_norm}
        m+n\geq \max\left\{5184 \frac{B_2^\ast}{\lambda_0}\log\left(\frac{8 \cdot 414^2 B_2^\ast}{\lambda_0 \delta}\right), \frac{1296 Q_0^2}{L^2 \lambda_0^{2+1/\alpha}},\frac{16 B_2^\ast}{1296 b^\ast}\log\!\left(\frac{2}{\delta}\right) \right\},
    \end{align}
    where
    \begin{align*}
        \overline{\lambda}:=\max\{\lambda_i:A(\lambda_i) &\leq \eta\cdot S(m+n,\delta,\lambda_i)\},
    \end{align*}
    $S(m+n,\delta,\lambda):=\frac{414 Q_0^2}{(m+n)\lambda^{1/\alpha}}\log\!\left(\frac{2}{\delta}\right)$ and $A(\lambda):=414 L^2\lambda^{1+2 r}$ are defined as in Lemma~\ref{lemma:marteau_main_result}, and $\eta:=1296\log^{-1}\!\left(\frac{2}{\delta}\right)$ as defined in Proposition~\ref{prop:fast_rate}.
    Note that we used the requirement $\frac{1296 Q_0^2}{L^2\lambda_0^{2+1/\alpha}}\geq \frac{1296 Q_0^2}{L^2\lambda_0^{1+2 r+1/\alpha}}$ in Eq.~\eqref{eq:sample_size_empirical_norm} that is independent of $r$, and, we added a requirement based on $b^\ast$ from Assumption~\ref{ass:Hessian_of_sequence} to satisfy all conditions needed to apply Lemma~\ref{lemma:Hessian_concentration}.
    From Lemma~\ref{lemma:marteau_main_result} and the definition of $\overline{\lambda}$ we obtain a bound for the second term of Eq.~\eqref{eq:first_stage_proof_empirical}
    \begin{align}
    \label{eq:proof_helper_empirical3}
        \norm{f_{\z}^{\overline{\lambda}}-f_{\mathcal{H}}}_{\Hess_{\overline{\lambda}}(f_{\mathcal{H}})}^2
        \leq S(m+n,\delta,\overline{\lambda}) + A(\overline{\lambda})
        &\leq 2\eta\cdot S(m+n,\delta,\overline{\lambda}),
    \end{align}
    with holds again with probability at least $1-2\delta$.
    %
    
    To bound the first term in Eq.~\eqref{eq:first_stage_proof_empirical}, we note that Lemma~\ref{lemma:Hessian_concentration} implies that
    \begin{align}
        \label{eq:proof_heklper_empirical4}
        \norm{f_{\z}^{\lambda_\mathrm{BP}}-f_{\z}^{\overline{\lambda}}}_{\Hess_{\overline{\lambda}}(f_{\mathcal{H}})}^2
        \leq
        8 \norm{f_{\z}^{\lambda_\mathrm{BP}}-f_{\z}^{\overline{\lambda}}}_{\widehat{\Hess}_{\overline{\lambda}}(f_{\z}^{\overline{\lambda}})}^2,
    \end{align}
    which holds with probability at least $1-4l\delta$ for all possibilities $\lambda_\mathrm{BP},\overline{\lambda}\in\{\lambda_0,\ldots,\lambda_l\}$.
    Let us now focus on all $j$ which satisfy $\lambda_j\leq\overline{\lambda}$ and note that $\overline{\lambda}\leq\lambda^\ast$ by the monotonicity of $S(m+n,\delta,\lambda)$ and $A(\lambda)$.
    For the above mentioned $j$, we apply Lemma~\ref{lemma:Hessian_concentration}, which grants us, with probability at least $1-2 l\delta$ for each $j$ at the same time,
    \begin{align*}
        \norm{f_{\z}^{\overline{\lambda}}-f_{\z}^{\lambda_j}}_{\widehat{\Hess}_{\lambda_j}(f_{\z}^{\lambda_j})}^2
        \leq
        6 \norm{f_{\z}^{\overline{\lambda}}-f_{\z}^{\lambda_j}}_{\Hess_{\lambda_j}(f_\mathcal{H})}^2.
    \end{align*}
    Now, the same reasoning as for Eq.~\eqref{eq:necessary_balancing_condition} can be applied, to obtain, for any $\lambda_j\leq \overline{\lambda}\leq\lambda^\ast$,
    \begin{align*}
        \norm{f_{\z}^{\overline{\lambda}}-f_{\z}^{\lambda_j}}_{\widehat{\Hess}_{\lambda_j}(f_{\z}^{\lambda_j})}^2\leq 48\eta\cdot S(m+n,\delta,\lambda_j),
    \end{align*}
    which holds with probability at least $1-(2+2l)\delta$ resulting from the joint application of Lemma~\ref{lemma:marteau_main_result} and Lemma~\ref{lemma:Hessian_concentration}.
    As a consequence, we get $\overline{\lambda}\leq \lambda_\mathrm{BP}$ from the maximizing property of $\lambda_\mathrm{BP}$ in Eq.~\eqref{eq:balancing_principle_estimate}.
    Since $\overline{\lambda}\leq \lambda_\mathrm{BP}$ and Eq.~\eqref{eq:balancing_principle_estimate} holds, we get
    \begin{align}
    \label{eq:proof_helper_mepirical2}
        \norm{f_{\z}^{\lambda_\mathrm{BP}}-f_{\z}^{\overline{\lambda}}}_{\widehat{\Hess}_{\overline{\lambda}}(f_{\z}^{\overline{\lambda}})}^2
        \leq 48\eta\cdot S(m+n,\delta,\overline{\lambda}),
    \end{align}
    which realizes with probability at least $1-(2+2l)\delta$.
    Combining the results above gives
    \begin{align*}
        B(f_\z^{\lambda_\mathrm{BP}})-B(f_\mathcal{H})
        \leq 32\cdot 48\eta\cdot S(m+n,\delta,\overline{\lambda})+2\cdot 4\eta\cdot S(m+n,\delta,\overline{\lambda}) = 1544\eta\cdot S(m+n,\delta,\overline{\lambda}),
    \end{align*}
    which holds with probability at least $(6+6l)\delta$.
    With the same arguments as used in the proof of Theorem~\ref{thm:balancing_principle_known_norm}, it holds that
    $S(m+n,\delta,\overline{\lambda})\leq\xi^{1/\alpha} S(m+n,\delta,\lambda^\ast)$ and further
    \begin{align*}
        B(f_{\z}^{\lambda_\mathrm{BP}})-B(f_\mathcal{H})
        &\leq 1544\eta\cdot \xi^{1/\alpha} S(m+n,\delta,\lambda^\ast)
        \leq C (m+n)^{-\frac{2 r \alpha+\alpha}{2 r \alpha+\alpha+1}}
    \end{align*}
    with the constant
    \begin{align}
        \label{eq:const_in_empirical_risk_balancing_bound}
        C:=1296\cdot 1544 \xi^{1/\alpha}\cdot 828  L^2 \left(\frac{1296 Q_0^2}{L^2}\right)^{\frac{\alpha+2 r\alpha}{1+2\alpha r + \alpha}}.
    \end{align}
\end{proof}

\subsection{Derivation of Empirical Norms}
\label{subsec:derivation_of_norm}

\paragraph*{Exp method}
Let $\ell_{z}(f)=e^{-y f(x)}$ for $z=(x,y)$.
Then, the reproducing property of the RKHS $\mathcal{H}$ with kernel $k$ gives
\begin{align*}
    \nabla^2 \ell_z(f)= \nabla^2 e^{-y\cdot f(x)}
    = \nabla^2 e^{-y\cdot \langle k(x,\cdot), f\rangle} = e^{-y\cdot \langle k(x,\cdot), f\rangle} k(x,\cdot) k(x,\cdot).
\end{align*}
For the minimizer $f_\z^\lambda=\sum_{j=1}^{m+n} \alpha_j k(x_j,\cdot)$ of Eq.~\eqref{eq:empirical_loss_objective} as determined by the representer theorem, see, e.g.~\citep{scholkopf2001generalized}, it follows that
\begin{align*}
    \widehat{\Hess}_\lambda(f_\z^\lambda)=
    \frac{1}{m+n}\sum_{i=1}^{m+n}
    e^{-y_i\cdot \sum_{j=1}^{m+n} \alpha_j k(x_i,x_j)} k(x_i,\cdot) k(x_i,\cdot) +\lambda I.
\end{align*}
For $f_\z^{\lambda_s}=\sum_{j=1}^{m+n} \alpha_j k(x_j,\cdot)$ and $f_\z^{\lambda_t}=\sum_{j=1}^{m+n} \beta_j k(x_j,\cdot)$ we obtain
\begin{align*}
    \norm{f_\z^{\lambda_s}-f_\z^{\lambda_t}}^2_{\widehat{\Hess}_{\lambda_t}(f_\z^{\lambda_t})} &=
    \left\langle \widehat{\Hess}_{\lambda_t}^{\frac{1}{2}}(f_\z^{\lambda_t}) (f_\z^{\lambda_s}-f_\z^{\lambda_t}), \widehat{\Hess}_{\lambda_t}^{\frac{1}{2}}(f_\z^{\lambda_t}) (f_\z^{\lambda_s}-f_\z^{\lambda_t})\right\rangle\\
    &=
    \left\langle \widehat{\Hess}_{\lambda_t}(f_\z^{\lambda_t}) (f_\z^{\lambda_s}-f_\z^{\lambda_t}), f_\z^{\lambda_s}-f_\z^{\lambda_t}\right\rangle\\
    &=\left\langle \widehat{\Hess}_{\lambda_t}(f_\z^{\lambda_t}) \sum_{j=1}^{m+n}(\alpha_j-\beta_j) k(x_j,\cdot), \sum_{j=1}^{m+n}(\alpha_j-\beta_j) k(x_j,\cdot)\right\rangle\\
    &=\frac{1}{m+n}\sum_{i=1}^{m+n}\left\langle \sum_{t=1}^{m+n} (\alpha_t -\beta_t) k(x_i,x_t) e_{i} k(x_i,\cdot), \sum_{j=1}^{m+n}(\alpha_j-\beta_j) k(x_j,\cdot)\right\rangle\\
    &\phantom{=}~+~\lambda_t (\boldsymbol{\alpha}-\boldsymbol{\beta})^\text{T}{\bf K} (\boldsymbol{\alpha}-\boldsymbol{\beta})\\
     &=\frac{1}{m+n}
    (\boldsymbol{\alpha}-\boldsymbol{\beta})^\text{T} {\bf K} {\bf E} {\bf K} (\boldsymbol{\alpha}-\boldsymbol{\beta})+ \lambda_t (\boldsymbol{\alpha}-\boldsymbol{\beta})^\text{T}{\bf K} (\boldsymbol{\alpha}-\boldsymbol{\beta}),
\end{align*}
where $\boldsymbol{\alpha}:=(\alpha_1,\ldots,\alpha_{m+n})$, $\boldsymbol{\beta}:=(\beta_1,\ldots,\beta_{m+n})$, diagonal matrix ${\bf E}:=\mathrm{diag}(e_1,\ldots,e_{m+n})$, $e_{i}:=e^{-y_i\cdot \sum_{j=1}^{m+n} \beta_j k(x_i,x_j)}$ and ${\bf K}:=\left(k(x_i,x_j)\right)_{i,j=1}^{m+n}$.

\paragraph*{KuLSIF}

Let $\ell_{(x,-1)}(f)=\frac{1}{2} f(x)^2$ and $\ell_{(x,1)}(f)=- f(x)$.
Then, the reproducing property of the RKHS $\mathcal{H}$ with kernel $k$ gives
\begin{align*}
    \nabla^2 \ell_{(x,-1)}(f)=  k(x,\cdot) k(x,\cdot), \nabla^2 \ell_{(x,1)}(f)=  0.
\end{align*}
For the minimizer $f_\z^\lambda=\sum_{j=1}^{m+n} \alpha_j k(x_j,\cdot)$ of Eq.~\eqref{eq:empirical_loss_objective}, it follows that
\begin{align*}
    \widehat{\Hess}_\lambda(f_\z^\lambda)=
    \frac{1}{m+n}\sum_{i=1}^{m+n}
    \frac{1-y_i}{2} k(x_i,\cdot) k(x_i,\cdot) +\lambda I.
\end{align*}
For $f_\z^{\lambda_s}=\sum_{j=1}^{m+n} \alpha_j k(x_j,\cdot)$ and $f_\z^{\lambda_t}=\sum_{j=1}^{m+n} \beta_j k(x_j,\cdot)$ we get, analogously to the derivation for the Exp method,
\begin{align*}
    \norm{f_\z^{\lambda_s}-f_\z^{\lambda_t}}^2_{\widehat{\Hess}_{\lambda_t}(f_\z^{\lambda_t})}
     &=\frac{1}{m+n}
    (\boldsymbol{\alpha}-\boldsymbol{\beta})^\text{T} {\bf K} {\bf E} {\bf K} (\boldsymbol{\alpha}-\boldsymbol{\beta})+ \lambda_t (\boldsymbol{\alpha}-\boldsymbol{\beta})^\text{T}{\bf K} (\boldsymbol{\alpha}-\boldsymbol{\beta}),
\end{align*}
where $\boldsymbol{\alpha}:=(\alpha_1,\ldots,\alpha_{m+n})$, $\boldsymbol{\beta}:=(\beta_1,\ldots,\beta_{m+n})$, diagonal matrix ${\bf E}:=\mathrm{diag}(e_1,\ldots,e_{m+n})$, $e_{i}:=\frac{1-y_i}{2}$ and ${\bf K}:=\left(k(x_i,x_j)\right)_{i,j=1}^{m+n}$.

\section{Conclusion and Future Work}
We proposed error bounds for a large class of kernel methods for density ratio estimation for which no error bounds were known before (to the best of our knowledge).
The class consists of methods which minimize a regularized Bregman divergence over a reproducing kernel Hilbert space.
We also proposed a Lepskii type principle for choosing the regularization parameter.
The principle minimizes the error bounds without knowledge of the unknown regularity of the density ratio.
As a consequence it achieves a minimax optimal error rate in the special case of square loss.
The key proof technique for both, the error rates and the adaptive parameter choice, is a local quadratic approximation of the Bregman divergence, which is equivalent to an expected risk for a proper composite self-concordant loss function.
The minimax optimality of our error bounds for non-square losses is an open problem.

\section*{Acknowledgements}

We thank the anonymous reviewers for helpful comments and careful proofreading.
The research reported in this paper has been funded by the Federal Ministry for Climate Action, Environment, Energy, Mobility, Innovation and Technology (BMK), the Federal Ministry for Digital and Economic Affairs (BMDW), and the Province of Upper Austria in the COMET Module S3AI managed by the Austrian Research Promotion Agency FFG.

\bibliography{derivative}

\end{document}